\documentclass{article}
\usepackage{amsmath,amssymb,fullpage, amsthm}
\usepackage{authblk}
\usepackage{hyperref}
\usepackage[capitalize,noabbrev]{cleveref}
\usepackage{graphicx, color}
\usepackage[rgb,dvipsnames]{xcolor}
\usepackage{wrapfig}
\usepackage{mathtools}
\usepackage{enumerate,enumitem}
\usepackage{bbm}
\usepackage[margin=1in]{geometry}
\usepackage{amssymb}
\usepackage{subcaption}
\usepackage{float}
\usepackage{tikz}
\usepackage{yhmath}
\usepackage[numbers]{natbib}
\usepackage{tabu}
\usepackage{tabularx}
\usepackage{algorithm}
\usepackage{algorithmic}
\usepackage{multirow}
\usepackage{makecell}

\usepackage{mathabx}

\usepackage[flushleft]{threeparttable}
\usepackage{array,booktabs,makecell}
\usepackage{microtype}
\usepackage{nicematrix}
\usepackage{arydshln}
\newtheorem{theorem}{Theorem}[section]
\newtheorem{proposition}[theorem]{Proposition}
\newtheorem{lemma}[theorem]{Lemma}
\newtheorem{corollary}[theorem]{Corollary}
\theoremstyle{definition}
\newtheorem{definition}[theorem]{Definition}

\theoremstyle{theorem}
\newtheorem{remark}[theorem]{Remark}

\newcommand{\R}{\mathbb{R}}

\newcommand{\A}{\mathcal{A}}

\newcommand{\RSnew}[1]{\textcolor{Black}{#1}}
\newcommand{\SZnew}[1]{\textcolor{Black}{#1}}

\title{
GPTQ-intrinsic LoRA: A Near-optimal Algorithm for Low-precision Quantization with Low-rank Adaptation
}

\begin{document}
\author[1]{Shihao Zhang}
\author[2]{Rayan Saab}

\affil[1]{Department of Mathematics, University of California San Diego\\
\texttt{shz051@ucsd.edu}}

\affil[2]{Department of Mathematics and Hal{\i}c{\i}o\u{g}lu Data Science Institute, University of California San Diego\\
\texttt{rsaab@ucsd.edu}}

\date{}
\maketitle

\begin{abstract}
Post-training quantization is widely used for compressing large neural networks, but aggressive low-bit quantization can significantly degrade model quality. A common remedy is to augment the quantized weights with a low-rank correction, leading to approximations of the form $W\approx Q+LR$. In this paper, we study this low-precision plus low-rank representation through the layer-wise reconstruction objective $\|XW-X(Q+LR)\|_F^2$, where $X$ is a calibration matrix. We establish, to our knowledge, the first information-theoretic lower bounds for this problem under finite-alphabet and bounded low-rank compensation constraints. We then propose GPTQ-intrinsic LoRA, a training-free algorithm that incorporates the low-rank correction directly into a GPTQ-style (\citet{frantar2022gptq})  quantization pass by appropriately augmenting the calibration Hessian. For the choice $L=V_r$, where $V_r$ contains the top right singular vectors of $X$, we prove layer-wise reconstruction error bounds in which the usual GPTQ dependence on $\|X\|_F^2$ is replaced by the rank-$r$ residual $\|X-X_r\|_F^2$, up to regularization terms. Under natural structural assumptions, these bounds match the information-theoretic lower bounds in their dominant scaling, up to constants and mild factors. We also introduce Bid-Up, a fixed-grid quantization refinement step that can be alternated with optimal low-rank compensation with guaranteed non-increasing layer-wise reconstruction error. Experiments on Qwen3 language models and DeiT vision transformers show that GPTQ-intrinsic LoRA improves over GPTQ and GPTQ followed by low-rank compensation, with additional gains from refinement loops.
\end{abstract}

\section{Introduction}\label{sec:intro}
Large neural networks, including transformer based large language models (LLMs) are revolutionizing a wide range of applications like natural language processing, content generation, robotics as well as vision and reasoning tasks. However, such models require substantial memory to store their weights and considerable computational resources for inference. Consequently, the demand for post training model size reduction and inference optimization has grown \cite{neill2020overview, deng2020model, cheng2017survey}, especially in the need of deploying LLMs on resource-constrained devices. Common approaches for compressing deep neural networks include low-rank approximation \cite{zhang2015accelerating, chen2021drone, zhang2025theoretical}, pruning \cite{child2019generating, hoefler2021sparsity, frantar2023sparsegpt} and quantization \cite{gholami2022survey, liang2021pruning, wei2024advances}. 

When used alone, low-precision quantization has been established as a mainstream compression technique  for its superior performance \cite{kuzmin2023pruning, yang2024llmcbench}. By reducing the number of bits used to represent weights (and activations), quantization lowers storage requirements, memory bandwidth, and computation costs. Post-training quantization (PTQ) \cite{lybrand2021greedy, frantar2022gptq, zhang2023post, zhang2026beacon} is a particularly attractive approach because of its simplicity. It avoids backpropagation and adapts pre-trained models usually in a single pass or in very few passes over a small calibration data set. However, pushing PTQ into sub-4-bit regimes with minimal model degradation is still challenging due to limited representation capacity. Thus, a combination of the three compression techniques mentioned above becomes a natural remedy. At a high level, this leads to a matrix decomposition viewpoint, in which neural network weight matrices are approximated by structured representations that incorporate low-precision, low-rank, or sparse components. More specifically, given a weight matrix $W$, one seeks an approximation $\widetilde{W}$ of one of the forms $\widetilde{W}=LR+S$, $\widetilde{W}=Q+S$, or $\widetilde{W}=Q+LR$, corresponding respectively to low-rank plus sparse, low-precision plus sparse, and low-precision plus low-rank decompositions.

Low-rank plus sparse matrix decomposition is the basis of robust principal component analysis \cite{candes2011robust}. It has been rigorously studied in the context of compressed sensing \cite{zhou2011godec, tanner2023compressed} and discrete optimization \cite{bertsimas2023sparse}, and it has been applied to the compression of deep neural networks \cite{yu2017compressing, zhang2024oats, zhang2023loraprune}. Similarly, the combination of low-precision quantization and sparsity has also been used for compressing LLMs \cite{dettmers2023spqr, kim2023squeezellm}.

In this paper, we focus on the third approach, namely low-precision plus low-rank decomposition. In practice, this often arises by applying low-rank adaptation (LoRA) \cite{hu2021lora} to a quantized model \cite{dettmers2024qlora}. Specifically, LoRA fine-tuning keeps the matrix of ``base" weights fixed, and adaptation is achieved by learning a low-rank update $LR$ that is added to it. In our setting, the base weights are quantized. This parameterization greatly reduces the number of trainable parameters. This $Q+LR$ approach has been empirically shown to recover much of the expressive power of the original full-precision pretrained model, as we will see when we review the relevant literature in \cref{sec:background}. Nevertheless, despite substantial work on algorithmic and systems aspects of combining low-precision quantization with low-rank adaptation, a theoretical treatment remains lacking. The empirical observation that a LoRA-finetuned quantized model can achieve performance close to that of its full-precision counterpart naturally leads to two central questions:
\begin{quote}
\itshape
How well can a matrix be approximated by the sum of a low-rank matrix and a low-precision matrix, and can such an approximation be computed efficiently?
\end{quote}

The first question concerns information-theoretic lower bounds, while the second concerns the design of  efficient and scalable algorithms with guarantees on the approximation error. In this paper, we answer both questions affirmatively.

{\bf Notation.} Throughout the paper, the weight matrix of a layer is denoted by $W \in\mathbb{R}^{N \times N'}$, where each of the $N'$ columns represents a $N$-dimensional channel. $\mathcal{A}$ denotes the discrete quantization grid (or alphabet) used for weight quantization. In a low-precision plus low-rank decomposition, $Q\in\A^{N \times N'}$ will be reserved for the low-precision component and $L\in\mathbb{R}^{N \times r},\,R\in\mathbb{R}^{r \times N'}$ for the left and right low-rank factors.

For a grid $\A$, we associate
a memoryless scalar quantizer (MSQ) $\mathcal{Q}:\mathbb{R}\rightarrow \mathcal{A}$ given by
$
\mathcal{Q}(z):=\arg\min_{p \in \mathcal{A}}|z-p|,
$ which essentially executes a ``round to nearest" (RTN) operation. Given a vector $v \in \mathbb{R}^n$, we use $v_i$ for its $i$-th entry, $v_{\geq j}$ for the subvector $(v_j, \dots, v_n)^\top$, and we define $v_{\leq j}$ analogously. $\|v\|$ is the Euclidean norm of $v$. Given a matrix $A \in \mathbb{R}^{m \times n}$, we use $A_i$ to denote its $i$-th column. We use $A_{\geq j}$ to denote the submatrix $(A_j, \dots, A_n)$. We use $\text{col}(A)$ to denote the column space of $A$. $P_A$ is the orthogonal projection onto $\text{col}(A)$, and $P_{A^\perp}$ the projection onto its orthogonal complement. The $\|\cdot\|_\infty$ norm on a matrix  refers to the entry-wise infinity norm.
Throughout this paper, all indices start from 1, following the standard mathematical convention.

\SZnew{In the experimental section of this paper, we focus on asymmetric per-channel PTQ. Let $\{0,1,\dots,2^b\!-\!1\}$ represent the $b$-bit uniform integer grid. The standard asymmetric weight quantizer uses its scaled and shifted version
\(
\mathcal{A} = \{ s \cdot (k-z) : k=0,\dots,2^b\!-\!1 \},
\)
where $s$ is the scaling factor and $z$ is the offset (zero point) defined per channel (column) $w \in \mathbb{R}^N$ of $W$. In asymmetric quantization, $s$ is typically defined on a scaled min–max grid by
\(
s = \frac{\beta (\max(w) - \min(w))}{2^b-1},
\) and we follow the standard choice of zero point $z=\left\lceil\frac{-\min(w)}{\max(w)-\min(w)} \cdot (2^b-1)\right\rfloor$. 
The associated RTN operator $\mathcal{Q}$ in this case is
\begin{equation*}
\mathcal{Q}(\star) = s \cdot \left( \text{clip}\!\left( \left\lceil \tfrac{\star}{s} + z \right\rfloor ; 0, 2^b-1 \right) - z \right),
\end{equation*}
where $\text{clip}(x;a_{\min},a_{\max}) = \min\{\max\{x,a_{\min}\},a_{\max}\}$. }

\textbf{Layer-wise reconstruction.}
When compressing a layer $W$ into $\widetilde{W}$, the goal is to preserve the input–output behavior of the neural network. Consequently, it is not necessary that $\widetilde{W}$ be close to $W$ in a matrix norm. On the other hand, optimizing $\widetilde{W}$ with respect to the end-to-end training loss typically requires additional retraining of the model, which can be prohibitively expensive for large-scale neural networks. An alternative is to perform block-wise reconstruction, where multiple layers are optimized jointly to match intermediate activations \cite{hubara2021accurate, li2021brecq}. However, such approaches often scale poorly with the size of modern large language models due to the resulting dimensionality of the optimization problem. As a result, many post-training compression methods rely on layer-wise reconstruction, where each layer is compressed sequentially, one at a time, by minimizing the discrepancy between the outputs of the original and compressed layer while keeping the rest of the network fixed. {The post training quantization algorithm GPTQ \cite{frantar2022gptq}, which we will mention briefly in \cref{sec:background} and introduce with more details in \cref{sec:gptq intro}, falls into this regime.} This {layer-wise} approach leads to substantially more manageable optimization problems and enables efficient training-free compression of very large models \cite{frantar2022gptq, frantar2023sparsegpt, chen2021drone, zhang2024magr}. Throughout this paper, we will use $X \in \mathbb{R}^{m \times N}$ to denote the input calibration dataset of $m$ samples (\textit{e.g.}, tokens) stored in rows, resulting in the loss function $\|XW-X\widetilde{W}\|_F^2$.

Thus, in the case of $\widetilde{W}=Q+LR$, the optimization problem becomes 
\begin{equation}\label{eq:layer_err}
    \underset{Q \in \mathcal{A}^{N \times N'}, L\in\R^{N\times r}, R\in\R^{r\times N'}}{\mathrm{minimize}} \ \|XW - X(Q+LR)\|_F^2.
\end{equation}
Here we slightly abuse our notation by using the same symbol $\mathcal{A}$ for all columns (or blocks of entries within columns), when each column (or block) may have its own grid (i.e. zero point and scaling constant).

\subsection{Contributions and Roadmap} \label{sec:contributions}

We began by formulating the layer-wise low-precision plus low-rank compression problem as the reconstruction objective
    \eqref{eq:layer_err}, where a weight matrix \(W\) is approximated by \(Q+LR\). With that formulation in hand, our main contributions are as follows.

\begin{itemize}[leftmargin=*]

    \item {\bf  Information-theoretic lower bounds.} We prove information-theoretic lower bounds for approximation by low-precision plus low-rank representations. We first show in \cref{thm:density} that an unconstrained integer alphabet together with a rank-one correction is already dense, which explains why finite-alphabet and bounded-magnitude assumptions are necessary for meaningful lower bounds. Under these constraints, \cref{thm:fancy lower bound} gives a worst-case lower bound for general matrices, \cref{thm:fancy lb corollary} extends the bound to a non-spiky class of matrices, and \cref{thm:data lower bound} converts this into a lower bound for the layer-wise reconstruction error.
    \item {\bf GPTQ-intrinsic LoRA.} After reviewing GPTQ \cite{frantar2022gptq} in \cref{sec:gptq intro}, we introduce GPTQ-intrinsic LoRA, a training-free algorithm that incorporates the low-rank correction directly into a GPTQ-style quantization pass. The method chooses the left factor \(L=V_r\), where \(V_r\) contains the top right singular vectors of the calibration matrix \(X\), and then runs GPTQ on an augmented Hessian to produce both the low-precision component \(Q\) and the error absorption factor \(R\). The construction is described in \cref{sec:method,lora_in_GPTQ}.

        \item{\bf Upper bounds on the error.} We prove layer-wise reconstruction error bounds for GPTQ-intrinsic LoRA. The channel-wise bound in \cref{thm:main_channel}, its unregularized least squares analogue in \cref{thm:no_lam_cha}, and the layer-wise bounds in \cref{cor:no_lam_layer,thm:main_layer} show that the usual GPTQ dependence on \(\|X\|_F^2\) can be replaced by the rank-\(r\) residual \(\|X-X_r\|_F^2\), up to the regularization contribution. We also discuss in \cref{finite bit remark} how these infinite-alphabet estimates should be interpreted under a finite quantization alphabet.

    \item{\bf Near optimality of GPTQ-intrinsic LoRA.} We compare the lower and upper bounds in \cref{sec:bound is tight}. Under an approximate rank-\(r\) structure for the calibration matrix and a non-spikiness assumption on \(W\), the upper bound for GPTQ-intrinsic LoRA matches the information-theoretic lower bound in its dominant dependence on the alphabet size and on the smallest singular value of \(X\), up to constants and mild factors.

    \item{\bf Bid-Up.} We propose Bid-Up, a fixed-grid coordinate refinement step for the quantized component \(Q\), and combine it with the {O}ptimal {L}ow-{r}ank {C}ompensation (OLrC) technique from \cref{OLrC} to refine both the low-precision and low-rank components. Each OLrC step optimizes the low-rank compensation for fixed \(Q\), while each Bid-Up coordinate update exactly minimizes over the fixed quantization grid, so a full refinement loop cannot increase the layer-wise reconstruction error, but may decrease it. The method is described in \cref{sec:bid-up,Bid-up}.

    \item{\bf Numerical Validation.} We validate the method empirically on language models and vision transformers. The Qwen3 experiments in \cref{tbl:qwen3} show that GPTQ-intrinsic LoRA improves over GPTQ and over GPTQ followed by OLrC, especially in low-bit regimes. The DeiT-B and DeiT-III-L experiments in \cref{tab1,tab2} show analogous gains for vision transformers, as well as further improvements from additional OLrC and Bid-Up refinement loops.

\end{itemize}

\section{Related Work}\label{sec:background}
To the best of our knowledge, there has been very limited theoretical investigation of the optimization problem \eqref{eq:layer_err}, and of error bounds associated with methods that approximate its solution. In contrast, a substantial body of empirical and algorithmic work has been developed. Important works applying LoRA  style finetuning to quantization include QLoRA \citep{dettmers2024qlora} and QA-LoRA\cite{xu2023qa}. They follow the original LoRA's\cite{hu2021lora}  “Noise \& Zeros” initialization scheme with a focus on reducing memory usage during finetuning and improving inference efficiency. LoftQ \citep{li2023loftq} and LQ-LoRA \cite{guo2023lq} initialize the low-rank factors $L$ and $R$ so that $LR$ is the best low-rank approximation of $W\!-\!Q$ given by singular value decomposition (SVD, by e.g. Eckart-Young theorem\cite{eckart1936approximation}) and finetune the low-rank factors from that. The idea of using the SVD to shrink the gap between $Q$ and $W$ was introduced earlier in \citep{yao2023zeroquant} as Low Rank Compensation (LoRC). SVDQuant \citep{li2024svdqunat} explores a similar approach in a different order, where one first computes an SVD to find a low-rank approximation $LR$ for $W$ and then quantizes the remainder $W-LR$ to obtain $Q$. 

{\bf Calibrated LoRA Initialization. }Going beyond initializations that are independent of calibration data, LQER \cite{zhang2024lqer} leverages an activation induced scale matrix to initialize the low-rank factors. To further enhance initialization, more recent works such as QERA \cite{zhang2024qera}, CALDERA \cite{saha2024compressing}, EoRA \cite{liu2024eora} and CLoQ \cite{deng2025cloq} have all converged to an approach involving approximately solving the layer-wise reconstruction problem \eqref{eq:layer_err}. 
A common theme in these four methods is that the quantized matrix $Q$ is fixed first, after which the low-rank factors are obtained by solving the rank-constrained reconstruction problem
\begin{equation}\label{eq:gsvd_given_Q}
    \underset{L\in\R^{N\times r}, R\in\R^{r\times N'}}{\mathrm{minimize}} \ \|X(W-Q) - XLR\|_F^2,
\end{equation}
for a prescribed rank $r$ and calibration matrix $X$.

In particular,  \cite{saha2024compressing, liu2024eora, deng2025cloq} obtain $Q$ via the GPTQ\cite{frantar2022gptq} or LDLQ\cite{chee2023quip} algorithms (which are known to be equivalent). The rank constrained regression problem \eqref{eq:gsvd_given_Q} admits an explicit solution via the generalized singular value decomposition (GSVD) \cite{takane2001constrained}. The use of GSVD for low-rank approximation in neural networks appears at least as early as \cite{zhang2015accelerating}. Assuming that $X^\top X$ is invertible, an optimal product is given by
\begin{equation}\label{eq:gsvd_sol}
    L^\star R^\star
    =
    (X^\top X)^{-\frac{1}{2}}
    \mathcal{T}_r\big((X^\top X)^{\frac{1}{2}}(W-Q)\big),
\end{equation}
where $\mathcal{T}_r$ returns a best rank-$r$ approximation, computed by truncated SVD. We summarize this solution to \eqref{eq:gsvd_given_Q} in \cref{OLrC} and present more background on the GSVD in \cref{sec:GSVD}. Consequently, the minimum value in \eqref{eq:gsvd_given_Q} is characterized by the following proposition.

\begin{proposition}\label{prop:gsvd}
    Let $M\in\R^{N\times N'}$ and $X\in\R^{m\times N}$, and assume that $H=X^\top X$ is invertible. Then
    \[
    \underset{L\in\R^{N\times r}, R\in\R^{r\times N'}}{\min}
    \|XM-XLR\|_F^2
    =
    \|H^{\frac{1}{2}}M-\mathcal{T}_r(H^{\frac{1}{2}}M)\|_F^2.
    \]
\end{proposition}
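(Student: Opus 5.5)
The plan is to reduce the weighted problem to an unweighted low-rank approximation problem and then apply the Eckart--Young theorem. The key identity is that for any $A\in\R^{N\times N'}$,
\[
\|XA\|_F^2=\operatorname{tr}(A^\top X^\top X A)=\operatorname{tr}(A^\top H A)=\|H^{\frac12}A\|_F^2,
\]
where $H^{\frac12}$ is the symmetric positive definite square root of $H$, which exists because $H$ is positive semidefinite and assumed invertible. Applying this with $A=M-LR$ gives $\|XM-XLR\|_F^2=\|H^{\frac12}M-H^{\frac12}LR\|_F^2$ for every choice of factors.

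Next we show that the set $\{H^{\frac12}LR: L\in\R^{N\times r},\,R\in\R^{r\times N'}\}$ equals the set of all $N\times N'$ matrices of rank at most $r$. One inclusion is immediate, since $\operatorname{rank}(H^{\frac12}LR)\le r$. For the other, any matrix $B$ with $\operatorname{rank}(B)\le r$ can be factored as $B=\tilde L R$ with $\tilde L\in\R^{N\times r}$; for instance, take a truncated SVD and pad with zero columns if the rank is below $r$. Setting $L=H^{-\frac12}\tilde L$ then gives $H^{\frac12}LR=B$. This is the only place invertibility of $H$ is used. The minimization over $(L,R)$ is therefore equivalent to
\[
\min_{\operatorname{rank}(B)\le r}\|H^{\frac12}M-B\|_F^2 .
\]

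By the Eckart--Young theorem, this minimum is attained at $B=\mathcal{T}_r(H^{\frac12}M)$, and its value is $\|H^{\frac12}M-\mathcal{T}_r(H^{\frac12}M)\|_F^2$, which is the sum of squares of the singular values of $H^{\frac12}M$ beyond the $r$-th. As a byproduct, the minimizer is $L^\star R^\star=H^{-\frac12}\mathcal{T}_r(H^{\frac12}M)$, which is consistent with \eqref{eq:gsvd_sol}.

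None of these steps is a real obstacle. The only point needing care is the surjectivity of the reparametrization onto rank-$\le r$ matrices, including the case of rank strictly less than $r$, and this is handled by the zero-padding. The minimum is attained, not merely an infimum, because $\mathcal{T}_r(H^{\frac12}M)$ is itself realized by explicit factors.
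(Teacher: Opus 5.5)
Your proof is correct and follows essentially the same route as the paper: Appendix B derives the result from the Penrose regression identity with $A=X$, $B=I$, $Y=XM$, where $\widehat Z=M$, $G=H^{1/2}M$ and the constant $\operatorname{tr}(Y^\top Y)-\operatorname{tr}(G^\top G)$ vanishes, so it reduces exactly to your identity $\|X(M-Z)\|_F^2=\|H^{1/2}M-H^{1/2}Z\|_F^2$, followed by Eckart--Young. You also verify explicitly that $H^{1/2}LR$ ranges over all matrices of rank at most $r$, a step the paper leaves implicit.
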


\begin{algorithm}[t]
\caption{OLrC (\textbf{O}ptimal \textbf{L}ow-\textbf{r}ank \textbf{C}ompensation)}
\begin{algorithmic}\label{OLrC}
\STATE \textbf{Input:} $H=X^\top X, W, Q, r$
\STATE \(U, \Sigma, V = \mathrm{
svd}\left(H^\frac{1}{2}(W-Q)\right)\)\hfill{Compute singular value decomposition}

\STATE\(U\gets H^{-\frac{1}{2}}U\)\hfill{Scale $H^\frac{1}{2}$ back to finish GSVD}

\STATE\(L = U_r\Sigma_r, R = V_r^\top\)\hfill{Truncate to rank $r$}

\STATE \textbf{return }{$L$, $R$}
\end{algorithmic}
\end{algorithm}

This closed-form characterization is a common mathematical core underlying these methods.
Indeed, the equivalence between the low-rank factors derived in QERA (Theorem 1 in \cite{zhang2024qera}) and CALDERA (Lemma 4.2 in \cite{saha2024compressing}) has already been discussed in \citet[Appendix A.3]{zhang2024qera}.\footnote{One caveat is that CALDERA allows the low-rank factors $L$ and $R$ to be generated directly in low precision.} One can similarly verify that CLoQ (Theorem 3.1 in \cite{deng2025cloq}) and EoRA (Theorem 1 in \cite{liu2024eora}) lead to the same mathematical formulation.

ApiQ \cite{liao2024apiq} targets a modified version of \eqref{eq:layer_err} that uses two calibration matrices, one sampled from the full-precision model and one sampled from the quantized model. It uses a back-propagation-based procedure to minimize the layer-wise, or block-wise, reconstruction error by jointly training the low-rank factors $L,R$ and the quantization parameters for $Q$. As a result, ApiQ is more computationally expensive than the closed-form initialization methods discussed above and does not provide comparable theoretical guarantees.
As a final example, SRR \cite{cho2026preserve} instead splits the rank budget $r$ into two parts and computes low-rank factors both before and after quantization.

\section{Information Theoretic Lower Bounds}\label{sec:lower bound}

In this section, we study the smallest possible value of 
\[
    \|XW - X(Q+LR)\|_F^2
\]
over all choices of the low-precision component $Q$ and the low-rank correction $LR$, regardless of the algorithm used to construct them. To obtain such lower bounds, we first study the simpler approximation error
\(
    \|W-(Q+LR)\|_F,
\)
and then derive corresponding bounds for
\(
    \|XW - X(Q+LR)\|_F^2
\)
as corollaries.

We begin with a warm-up result that may initially appear counterintuitive. Any real-valued matrix can be approximated arbitrarily well by the sum of a rank-one matrix and an integer-valued matrix, as the theorem below shows.

\begin{theorem}\label{thm:density}
    Let $W \in \mathbb{R}^{N \times N'}$ be any matrix, and let $\delta>0$. Then, for every $\epsilon>0$, there exist $Z \in \mathbb{Z}^{N \times N'}$, $a\in\mathbb{R}^N$, and $b\in\mathbb{R}^{N'}$ such that
    \[
        \| W - (\delta Z + ab^\top)\|_{\mathrm{op}} < \epsilon.
    \]
    Equivalently, the set
    \[
        \mathbb{S}_\delta^{1}
        :=
        \{\delta Z+\Lambda :
        Z\in \mathbb{Z}^{N \times N'},\,
        \Lambda \in \mathbb{R}^{N \times N'},\,
        \mathrm{rank}(\Lambda)\RSnew{\leq} 1\}
    \]
    is dense in $\mathbb{R}^{N \times N'}$.
\end{theorem}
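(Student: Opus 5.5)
The approach is Kronecker's simultaneous approximation theorem, applied with a rank-one term whose row factor is all ones and whose column factor is a large, rationally independent multiplier. First rescale: it suffices to treat $\delta=1$, since $W-(\delta Z+ab^\top)=\delta\bigl(W/\delta-(Z+(a/\delta)b^\top)\bigr)$. Also, since all norms on $\mathbb{R}^{N\times N'}$ are equivalent and $\|\cdot\|_{\mathrm{op}}\le\|\cdot\|_F\le\sqrt{NN'}\,\max_{ij}|\cdot|$, it suffices to make every entry of $W-(Z+ab^\top)$ smaller than $\eta:=\epsilon/\sqrt{NN'}$. Entrywise, the goal is $W_{ij}-a_ib_j\in\mathbb{Z}+(-\eta,\eta)$ for all $i,j$. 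We then choose $Z_{ij}$ as the nearest integer to $W_{ij}-a_ib_j$, so the problem reduces to finding $a,b$ with $\|W_{ij}-a_ib_j\|_{\mathbb{R}/\mathbb{Z}}<\eta$ for all $(i,j)$.

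Take $b=t\mathbf{1}\in\mathbb{R}^{N'}$, so $a_ib_j=t a_i$, with $t$ a scalar to be chosen. The condition then becomes $t a_i\approx W_{ij}$ mod $1$ for all $j$. This fails unless the columns of $W$ agree mod $1$, so the roles must be spread out differently. Instead we pick an auxiliary vector $c\in\mathbb{R}^{NN'}$ with entries $c_{ij}$ such that $\{1\}\cup\{c_{ij}\}$ is linearly independent over $\mathbb{Q}$. The product structure $a_ib_j$ must be able to realize these, so we take $a_i=\alpha_i$ and $b_j=t\beta_j$, where $\alpha_1,\dots,\alpha_N,\beta_1,\dots,\beta_{N'}$ are chosen algebraically independent over $\mathbb{Q}$ (for instance, generic reals). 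Then the $NN'$ products $\alpha_i\beta_j$ are distinct monomials of degree $2$, so $\{1\}\cup\{\alpha_i\beta_j\}$ is $\mathbb{Q}$-linearly independent. By Kronecker's theorem, the orbit $\{t(\alpha_i\beta_j)_{i,j}\bmod 1: t\in\mathbb{R}\}$ is dense in the torus $(\mathbb{R}/\mathbb{Z})^{NN'}$. (Independence of the $\alpha_i\beta_j$ alone suffices for the continuous-flow version; including $1$ also covers $t\in\mathbb{Z}$.) Hence there is some $t$ with $t\alpha_i\beta_j\in W_{ij}+\mathbb{Z}+(-\eta,\eta)$ for every $(i,j)$. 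Setting $a=\alpha$ and $b=t\beta$ gives the desired entries, and we finish by rounding to get $Z$.

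The main obstacle is the step that ensures the entries $a_ib_j$ of a rank-one matrix can be made jointly equidistributed mod $1$. Rank one imposes multiplicative relations, namely $(a_ib_j)(a_kb_l)=(a_ib_l)(a_kb_j)$, but these are not additive $\mathbb{Q}$-linear relations, which is exactly what Kronecker's theorem forbids. I would justify the linear independence of the degree-two monomials $\alpha_i\beta_j$ by choosing the $\alpha_i,\beta_j$ algebraically independent, which is possible because $\mathbb{R}$ has infinite transcendence degree over $\mathbb{Q}$. Concretely, one can take $\alpha_i=\pi^{i}$ and $\beta_j=\pi^{(N+1)j}$, whose products $\pi^{i+(N+1)j}$ are distinct powers of the transcendental number $\pi$ and hence $\mathbb{Q}$-independent together with $1$. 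The equivalence with density of $\mathbb{S}^1_\delta$ is then immediate from the definition.
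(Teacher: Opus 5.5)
Your proof is correct, and it follows a genuinely different route from the paper's. The paper works in operator norm and argues geometrically. After transposing so that $N\ge N'$ and rescaling to $\delta=1$, it uses the discrete Kronecker--Weyl theorem in $\mathbb{T}^{N'-1}$ to show that $\mathbb{Z}^{N'}$ projects densely onto the hyperplane $T=\mathrm{span}\{e_i+\alpha_i e_{N'}\}$. It then approximates each row of $W$ on $T$ by an integer vector, so that $\|(W-Z)|_T\|_{\mathrm{op}}<\epsilon$, deduces $\sigma_2(W-Z)<\epsilon$ from Courant--Fischer, and takes $ab^\top$ to be the truncated SVD of $W-Z$.

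You instead work entrywise in the full torus $\mathbb{T}^{NN'}$. You fix a rank-one direction $\alpha\beta^\top$ whose $NN'$ entries $\pi^{i+(N+1)j}$ are distinct powers of $\pi$, hence $\mathbb{Q}$-linearly independent. The continuous Kronecker flow $t\alpha\beta^\top$ is then dense modulo $\mathbb{Z}^{N\times N'}$, and rounding finishes the argument.

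Your argument is shorter. It needs no transposition, no separate $N'=1$ case, and no QR or singular-value step. It also proves something strictly stronger: approximation in the entrywise max norm, with the left factor $a=(\pi^i)_i$ fixed independently of $W$ and the right factor confined to a single ray $\{t\beta\}$. In other words, one line of rank-one matrices is already dense modulo the integer lattice.

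The paper's route, in exchange, gives structural information that matches its later themes. It shows there is an integer $Z$ for which the residual $W-Z$ is nearly rank one in operator norm, and that the correction is exactly the optimal truncated-SVD compensation of that residual. It also needs only a discrete equidistribution statement in the lower-dimensional torus $\mathbb{T}^{N'-1}$.

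In a final write-up you can delete the exploratory remarks about $b=t\mathbf{1}$ and the auxiliary vector $c$, since they play no role in the argument.
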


The proof is deferred to \cref{sec:proofs}. The following corollary is immediate.

\begin{corollary}
    For any $X\in\R^{m\times N}$ and $\delta>0$, the set
    \[
        \{X(\delta Q+\Lambda) :
        Q\in \mathbb{Z}^{N \times N'},\,
        \Lambda \in \mathbb{R}^{N \times N'},\,
        \mathrm{rank}(\Lambda)\RSnew{\leq} 1\}
    \]
    is dense in
    \(
        \{XW : W \in \mathbb{R}^{N \times N'}\}.
    \)
\end{corollary}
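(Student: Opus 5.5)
The corollary follows from \cref{thm:density} because multiplication by a fixed matrix is continuous. I would prove that every point of $\{XW : W\in\R^{N\times N'}\}$ is a limit of points of the form $X(\delta Q+\Lambda)$, with $Q$ integer-valued and $\mathrm{rank}(\Lambda)\le 1$.

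Fix $W\in\R^{N\times N'}$ and $\epsilon>0$. If $X=0$ there is nothing to prove, since every element of both sets is the zero matrix. Otherwise set $\epsilon' := \epsilon/\|X\|_{\mathrm{op}}$ and apply \cref{thm:density} with $\delta$ and $\epsilon'$. This gives $Z\in\mathbb{Z}^{N\times N'}$, $a\in\R^N$ and $b\in\R^{N'}$ with $\|W-(\delta Z+ab^\top)\|_{\mathrm{op}}<\epsilon'$.

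Take $Q:=Z$ and $\Lambda:=ab^\top$, which has rank at most one. Submultiplicativity of the operator norm then gives
\[
\|XW - X(\delta Q+\Lambda)\|_{\mathrm{op}} \le \|X\|_{\mathrm{op}}\,\|W-(\delta Z+ab^\top)\|_{\mathrm{op}} < \epsilon .
\]
All norms on the finite-dimensional space $\R^{m\times N'}$ are equivalent, so density in the operator norm is the same as density in the Frobenius norm or in the usual topology. This proves the claim.

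There is no real obstacle. The only points needing care are the trivial case $X=0$ and the remark that the norm used does not matter. An equivalent phrasing: the map $W\mapsto XW$ is a continuous surjection onto $\{XW\}$, and a continuous map sends a dense set to a set that is dense in its image.
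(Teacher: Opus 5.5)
Your proof is correct and is essentially the argument the paper intends when it calls the corollary immediate from \cref{thm:density}. Left multiplication by $X$ is Lipschitz with constant $\|X\|_{\mathrm{op}}$, so it maps the dense set from that theorem onto a dense subset of $\{XW : W\in\mathbb{R}^{N\times N'}\}$; your explicit $\epsilon/\|X\|_{\mathrm{op}}$ bookkeeping, the $X=0$ case, and the norm-equivalence remark just spell this out.
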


Since the set 
\(
    \mathbb{S}_\delta^{1},
\)
defined in \cref{thm:density}, is dense in $\mathbb{R}^{N \times N'}$, it follows immediately that for any $r\ge 1$ the larger set
\(
    \mathbb{S}_\delta^{\le r}
    =
    \{\delta Q+\Lambda :
    Q\in \mathbb{Z}^{N \times N'},\,
    \Lambda \in \mathbb{R}^{N \times N'},\,
    \mathrm{rank}(\Lambda)\le r\}
\)
is also dense in $\mathbb{R}^{N \times N'}$.

{This observation is mathematically striking and it suggests that even a rank-one correction can absorb all the quantization error, but it does not reflect what one observes in practice.} In practice, a rank-one correction is not sufficient to recover the performance of a heavily quantized model, and increasing the rank does lead to meaningful empirical improvements. The reason is that the density result exploits the unboundedness of $\mathbb{Z}^{N \times N'}$. In practical compression, the quantized component $Q$ must be represented using a finite number of bits, and the entries of the low-rank correction $\Lambda$ cannot be arbitrarily large. Thus, a more realistic 
model should constrain the number of values available to each entry of $Q$, while still allowing the grid spacing $\delta$ to adapt to the scale of $W$. It should also impose a magnitude constraint on the low-rank component. This motivates the following definition.

\begin{definition}
    For positive integers $r$ and $B$, and a positive real number $\rho$, define the constrained low-precision plus low-rank class by
    \[
        \mathbb{S}_{B}^{r,\rho}
        :=
        \left\{
        \delta Q+\Lambda :
        \delta>0,\ 
        Q\in \mathbb{Z}^{N \times N'},\ 
        \|Q\|_{\infty}\leq B,\ 
        \Lambda \in \mathbb{R}^{N \times N'},\ 
        \operatorname{rank}(\Lambda)\leq r,\ 
        \|\Lambda\|_F\leq \rho
        \right\}.
    \]
    Here $\|Q\|_\infty$ denotes the entrywise infinity norm, so $Q$ uses a scaled alphabet of size $2B+1$, while $\rho$ controls the size of the low-rank correction.
\end{definition}

With this definition, the quantized component $Q$ is restricted to a finite alphabet, while the grid spacing $\delta>0$ remains free to adapt to the scale of the matrix being approximated. The low-rank component $\Lambda$ is also required to have bounded Frobenius norm.
To normalize the problem, we restrict our attention to matrices in the unit Frobenius ball \(\mathbb{B}_F^1 := \{A\in\mathbb{R}^{N\times N'} : \|A\|_F\leq 1\}\). The question is whether the constrained class \(\mathbb{S}_{B}^{r,\rho}\) can still be dense in this unit ball. Suppose \(A\in\mathbb{B}_F^1\), \(\epsilon\in(0,1)\), and \(M\in\mathbb{S}_{B}^{r,\rho}\) satisfies \(\|M-A\|_F\leq \epsilon\). Then, by the triangle inequality, \(\|M\|_F\leq \|A\|_F+\epsilon<2\). Thus, when studying \(\epsilon\)-approximations to matrices in \(\mathbb{B}_F^1\), it suffices to consider elements in \(\mathbb{S}_{B}^{r,\rho}\cap \mathbb{B}_F^2\).

We need the following auxiliary lemma on the covering number of low-rank matrices with bounded Frobenius norm. Its proof, which is standard,  can be found in \cref{sec:proofs}.

\begin{lemma}\label{lemma:cover lr mat}
    Let
    \(
        \mathbb{S}^{r,\rho}
        :=
        \{\Lambda\in\R^{N\times N'} :
        \operatorname{rank}(\Lambda)\leq r,\ 
        \|\Lambda\|_F\leq \rho\}.
    \)
    Then, for $0<\epsilon\leq \rho$, its covering number in the Frobenius metric satisfies
    \[
        \mathcal{N}(\epsilon,\|\cdot\|_F,\mathbb{S}^{r,\rho})
        \leq
        \left(\frac{10\rho}{\epsilon}\right)^{r(N+N'+1)+1}.
    \]
\end{lemma}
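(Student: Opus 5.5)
The plan is the standard SVD-parametrization argument of Cand\`es and Plan for low-rank matrices. Every $\Lambda\in\mathbb{S}^{r,\rho}$ can be written as $\Lambda = U\Sigma V^\top$ with $U\in\R^{N\times r}$ and $V\in\R^{N'\times r}$ having orthonormal columns (padding with zero singular values if the rank is below $r$). Here $\Sigma$ is diagonal with $\|\Sigma\|_F=\|\Lambda\|_F\le\rho$. We cover each factor set separately and then control the error of the product.

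\textbf{Step 1 (nets for the factors).}
\begin{itemize}
\item For the diagonal factor, the set of diagonal $\Sigma$ with $\|\Sigma\|_F\le\rho$ is a Euclidean ball of radius $\rho$ in $\R^r$. The usual volumetric argument gives an $\epsilon/3$-net $\bar{\mathcal{S}}$ with $|\bar{\mathcal{S}}|\le(9\rho/\epsilon)^r$.
\item For the left factor, measure distance in the norm $\|U\|_{1,2}:=\max_i\|U_i\|$ (maximum column norm). The set of $U$ with orthonormal columns lies in $\{U:\|U\|_{1,2}\le1\}$, which is a product of $r$ unit balls in $\R^N$. It therefore has an $\epsilon/(3\rho)$-net of size at most $(9\rho/\epsilon)^{Nr}$ in this norm, using $\epsilon\le\rho$ so that $1+6\rho/\epsilon\le 9\rho/\epsilon$.
\item The right factor $V$ is handled the same way, with a net of size at most $(9\rho/\epsilon)^{N'r}$.
\end{itemize}
Net points need not themselves be orthonormal. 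We either allow that, or pass to an internal net at the cost of doubling radii; the latter is absorbed by replacing $9$ with $10$ in the constant.

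\textbf{Step 2 (error of the product).} Given $\Lambda=U\Sigma V^\top$, pick nearest net points $\bar U,\bar\Sigma,\bar V$ and telescope:
\[
U\Sigma V^\top-\bar U\bar\Sigma\bar V^\top=(U-\bar U)\Sigma V^\top+\bar U(\Sigma-\bar\Sigma)V^\top+\bar U\bar\Sigma(V-\bar V)^\top .
\]
\begin{itemize}
\item First term: since $V$ has orthonormal columns, its Frobenius norm equals $\|(U-\bar U)\Sigma\|_F\le\|U-\bar U\|_{1,2}\|\Sigma\|_F\le\epsilon/3$.
\item Middle term: we need $\bar U$ to preserve norms. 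This is where the internal net, with $\bar U$ orthonormal, is used. Alternatively we keep the $1,2$-bound and note $\|\bar U D\|_F\le\|\bar U\|_{1,2}\|D\|_F$ only for diagonal $D$; this is fine here because $\Sigma-\bar\Sigma$ is diagonal. The middle term is then at most $\epsilon/3$.
\item Last term: the symmetric argument, using that $\bar\Sigma$ is diagonal, gives at most $\epsilon/3$.
\end{itemize}
The total error is at most $\epsilon$. Handling the middle and last terms without assuming orthonormal net points is the only delicate step. The diagonal structure of $\Sigma$ is what makes the column-norm metric sufficient.

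\textbf{Step 3 (counting).} Multiplying the three net sizes gives at most $(9\rho/\epsilon)^{r(N+N'+1)}$ points, which is bounded by $(10\rho/\epsilon)^{r(N+N'+1)+1}$. The extra exponent $+1$ is slack; it can absorb a factor of $2$ from converting an external net into an internal one, or from a possible rescaling of $\rho$.

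The main obstacle is Step 2: ensuring the approximating factors do not inflate norms. If that becomes messy, the fallback is to cover the set of orthonormal-column matrices by an internal net. Any $\epsilon$-net of a set yields a $2\epsilon$-net made of points from the set, and we pay for the factor $2$ by enlarging the constant.
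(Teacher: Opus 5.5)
Your route differs from the paper's. The paper uses the Cand\`es--Plan bound $(9/\epsilon)^{r(N+N'+1)}$ for the unit Frobenius \emph{sphere} of rank-$\le r$ matrices as a black box. It writes $\Lambda=\rho^\star M_0$ and adds a separate net for the radius $\rho^\star\in[0,\rho]$, splitting $\epsilon$ as $\epsilon/1.1+\epsilon/11$. That radial factor is exactly where the $+1$ in the exponent and the constant $10$ come from. You instead reopen the Cand\`es--Plan argument and let the diagonal factor range over the full ball $\{\|\Sigma\|_F\le\rho\}$, so no radial net is needed. Done correctly, this gives the sharper bound $(7\rho/\epsilon)^{r(N+N'+1)}$. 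The price is that you redo the Stiefel-manifold covering yourself rather than citing it.

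Two steps in your write-up are wrong as stated, though both are easily repaired.

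\textbf{First, the external-net branch fails at the last term.} If $\bar U$ is not orthonormal, you cannot pass from $\|\bar U\bar\Sigma(V-\bar V)^\top\|_F$ to $\|(V-\bar V)\bar\Sigma\|_F$. Diagonality of $\bar\Sigma$ does not remove the cross terms $\langle\bar u_i,\bar u_j\rangle$. Moreover, $\|\bar U\|_{1,2}\le 1$ together with $\|U-\bar U\|_{1,2}\le\epsilon/(3\rho)$ only gives $\|\bar U\|_{\mathrm{op}}\le 1+\sqrt r\,\epsilon/(3\rho)$, so you lose an $r$-dependent factor. The middle-term trick works only because the exact orthonormal $V$ sits on the other side. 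So you genuinely need $\bar U$ orthonormal.

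\textbf{Second, your accounting for internal nets is wrong.} Converting an external net to an internal one by doubling the radius turns $1+6\rho/\epsilon$ into $1+12\rho/\epsilon\le 13\rho/\epsilon$ in \emph{each} of the $r(N+N')$ factors. Neither ``replacing $9$ by $10$'' nor the single extra power of $10\rho/\epsilon$ can absorb a loss of $(13/10)^{r(N+N')}$.

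\textbf{The fix costs nothing.} Take a maximal $\eta$-separated subset of the Stiefel manifold itself, with $\eta=\epsilon/(3\rho)$ in $\|\cdot\|_{1,2}$. It is an $\eta$-net contained in the manifold. The usual packing argument inside the unit $\|\cdot\|_{1,2}$-ball still bounds its size by $(1+2/\eta)^{Nr}\le(7\rho/\epsilon)^{Nr}$, using $\epsilon\le\rho$. Likewise, take the $\Sigma$-net inside the radius-$\rho$ ball, so that $\|\bar\Sigma\|_F\le\rho$, which the last term needs. With these choices all three telescoping terms are at most $\epsilon/3$, and the count gives the lemma with room to spare.
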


Now we state and prove the main result of this section.
\begin{theorem}[Information-theoretic Lower Bound on Approximating General Matrices]\label{thm:fancy lower bound}~\\
Assume that $r(N+N'+1)+2 < NN'$, $NN'>28$ and \SZnew{$\rho>\frac{1}{9}$}. Then
\[
\underset{A\in\mathbb{B}_F^1}{\sup}\quad\underset{M\in\mathbb{S}_{B}^{r,\rho}\cap \mathbb{B}_F^2}{\inf}\|M-A\|_F \geq \frac{20}{41}
    \cdot \dfrac{1}{(2B+1)^{\frac{J}{J-1}}(41\rho)^{\frac{1}{J-1}}},
\]
where \[J=J(r,N,N')=\frac{NN'}{r(N+N'+1)+2}\in(1,+\infty).\]
\end{theorem}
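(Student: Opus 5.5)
We argue by a volume (covering-number) comparison. Suppose every $A\in\mathbb{B}_F^1$ is within $\epsilon$ of some $M\in\mathbb{S}_{B}^{r,\rho}\cap\mathbb{B}_F^2$. Then an $\epsilon'$-net of $\mathbb{S}_{B}^{r,\rho}\cap\mathbb{B}_F^2$ yields a $(\epsilon+\epsilon')$-cover of $\mathbb{B}_F^1$. A covering of the unit ball in $\R^{NN'}$ by balls of radius $t<1$ needs at least $t^{-NN'}$ balls, by comparing volumes. So it suffices to upper bound the covering number of $\mathbb{S}_{B}^{r,\rho}\cap\mathbb{B}_F^2$ and optimize.

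The first step handles the scale $\delta$. For $\delta Q+\Lambda\in\mathbb{B}_F^2$ with $Q\neq 0$, we have $\delta\|Q\|_F\le 2+\rho$, so $\delta\le 2+\rho$ (since $\|Q\|_F\ge1$). The case $Q=0$ is absorbed into the low-rank part. Fix a pattern $Z\in\{-B,\dots,B\}^{N\times N'}$; there are $(2B+1)^{NN'}$ such patterns. For fixed $Z$, the map $\delta\mapsto\delta Z$ is a segment of length at most $(2+\rho)\|Z\|_F$. Instead of using this length, normalize by writing $\delta Z=\tau\, Z/\|Z\|_F$ with $\tau=\delta\|Z\|_F\in[0,2+\rho]$. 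This is a one-dimensional segment of length $\le 2+\rho$, covered by about $(2+\rho)/\epsilon'$ points.

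Next, cover $\mathbb{S}^{r,\rho}$ by \cref{lemma:cover lr mat}, using $(10\rho/\epsilon')^{r(N+N'+1)+1}$ points. Summing the two nets gives an $2\epsilon'$-net of the whole set of size at most
\[
(2B+1)^{NN'}\cdot\frac{c\rho}{\epsilon'}\cdot\Bigl(\frac{10\rho}{\epsilon'}\Bigr)^{r(N+N'+1)+1}\le (2B+1)^{NN'}\Bigl(\frac{C\rho}{\epsilon'}\Bigr)^{r(N+N'+1)+2}.
\]
Here $\rho>1/9$ lets us bound $2+\rho\lesssim\rho$ with explicit constants; tuning these constants is what produces the $41$.

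Combining, we need
\[
(\epsilon+2\epsilon')^{-NN'}\le(2B+1)^{NN'}(C\rho/\epsilon')^{NN'/J}.
\]
Choose $\epsilon'$ proportional to $\epsilon$, for instance $\epsilon'=\epsilon/40$ so that $\epsilon+2\epsilon'=\tfrac{41}{40}\epsilon$ in some normalization. Taking $NN'$-th roots gives
\[
\epsilon^{-1}\lesssim(2B+1)(C\rho/\epsilon)^{1/J}.
\]
This rearranges to $\epsilon^{1-1/J}\gtrsim(2B+1)^{-1}(C\rho)^{-1/J}$, that is,
\[
\epsilon\gtrsim(2B+1)^{-J/(J-1)}(C\rho)^{-1/(J-1)},
\]
which is the claimed form. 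The hypotheses enter here: $J>1$ is needed for the exponent, and $NN'>28$ keeps lower-order factors (such as the extra $+1$ in the exponent) from spoiling the constant.

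The main obstacle is the constant bookkeeping. We must check that $\epsilon+2\epsilon'<1$ and $\epsilon'\le\rho$, so that the lemma applies. We must also pick the split between $\epsilon$ and $\epsilon'$, and absorb the $\delta$-net and the constant $10$ into $41\rho$ exactly, to land on $\tfrac{20}{41}$. If the target bound exceeded $1$ the claim would be trivial, so we may assume $\epsilon<1$ throughout. Conceptually the argument is a standard entropy count; the delicate part is only the explicit constants.
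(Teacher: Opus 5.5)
Your strategy is exactly the paper's: a volume comparison, a union over the $(2B+1)^{NN'}$ integer patterns, and for each pattern a normalized segment of length $2+\rho$ plus the net from \cref{lemma:cover lr mat}. The gap is in the one place where you commit to constants, and here the explicit constants are the content of the statement.

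With $\epsilon'=\epsilon/k$, use even the sharp segment count $\lceil(2+\rho)/(2\epsilon')\rceil\le 10\rho/\epsilon'$ (via $\rho>1/9$). Together with the covering inequality $\mathcal{M}(\epsilon+2\epsilon')^{NN'}\ge 1$, your bookkeeping gives
\[
\sup_{A\in\mathbb{B}_F^1}\ \inf_{M\in\mathbb{S}_{B}^{r,\rho}\cap\mathbb{B}_F^2}\|M-A\|_F\ \ge\ \frac{k}{k+2}\,(2B+1)^{-\frac{J}{J-1}}\bigl(10(k+2)\rho\bigr)^{-\frac{1}{J-1}}.
\]
Your choice $k=40$ yields $\frac{20}{21}(2B+1)^{-J/(J-1)}(420\rho)^{-1/(J-1)}$. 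This is weaker than the claimed bound for all $J$ up to about $4.47$, and the hypotheses allow $J$ arbitrarily close to $1$. (Also, $\epsilon+2\epsilon/40=\frac{21}{20}\epsilon$, not $\frac{41}{40}\epsilon$.)

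Dominating both the prefactor $\frac{20}{41}$ and the base $41\rho$ for every $J>1$ forces $k\in[\frac{40}{21},\,2.1]$, that is, essentially $\epsilon'=\epsilon/2$. This is what the paper does:
\begin{itemize}
\item cover the constrained set at scale $\epsilon$ using $\epsilon/2$-nets of the two summands;
\item bound both $\lceil(2+\rho)/\epsilon\rceil$ and $10\rho/(\epsilon/2)$ by $20\rho/\epsilon$;
\item note that the $\epsilon$-neighborhood is then covered by $2\epsilon$-balls.
\end{itemize}
The resulting condition $(2B+1)^{NN'}(20\rho/\epsilon)^{NN'/J}(2\epsilon)^{NN'}<1$ gives the threshold $\frac12(2B+1)^{-J/(J-1)}(40\rho)^{-1/(J-1)}$. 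This already implies the stated bound.

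You also misidentify where $41$ and the hypothesis $NN'>28$ come from. The extra $+1$ in the exponent is not a leftover to be controlled: the segment net is precisely what turns $r(N+N'+1)+1$ into $r(N+N'+1)+2$, the denominator in the definition of $J$. In the paper, the sufficient condition is strengthened so that the covered volume is less than $\frac12\operatorname{Vol}(\mathbb{B}_F^1)$, a margin reused for the non-spiky class in \cref{why flat}. This puts a factor $2^{1/(NN')}$ into the base. The hypothesis $NN'>28$ is used only to guarantee $40\cdot 2^{1/(NN')}<41$, and $\frac{20}{41}$ is simply $20\rho/(41\rho)$. With your plain lower bound $t^{-NN'}$ on covering numbers and $\epsilon'=\epsilon/2$, that hypothesis is not needed for this statement.
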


Before proving the theorem, we provide a brief interpretation of the lower bound.

\begin{remark}
    The parameter $J$ measures how small the allowed low rank component is relative to the ambient dimension. Indeed, larger $J$ corresponds to a smaller rank budget relative to the size of the matrix. Under the assumptions of the theorem, the right side of the bound increases with $J$. Thus, as the allowed low rank component becomes more restrictive, the worst case approximation error necessarily becomes larger.

    Recall also that $2B+1$ is the size of the integer alphabet $\{-B,\ldots,-1,0,1,\ldots,B\}$ used for each entry of $Q$, up to the scaling factor $\delta$. The theorem shows that the lower bound scales as $\rho^{-1/(J-1)}$ in the magnitude budget for the low rank component, and as $(2B+1)^{-J/(J-1)}$ in the alphabet size. Thus, increasing either the allowed low rank magnitude or the quantization alphabet improves the best possible approximation, but with rates controlled by the effective compression parameter $J$.
\end{remark}

Now we prove \cref{thm:fancy lower bound}.

\begin{proof}
 It suffices to show that, for every $\epsilon$ below the claimed lower bound, there exists $A^\star\in \mathbb{B}_F^1$ such that
    \(
        \operatorname{dist}\bigl(A^\star,\mathbb{S}_{B}^{r,\rho}\cap \mathbb{B}_F^2\bigr)>\epsilon.
    \)
    For $\epsilon\in(0,1)$, define
    \[
        U_\epsilon
        :=
        \left\{
        A\in\mathbb{B}_F^1 :
        \operatorname{dist}\bigl(A,\mathbb{S}_{B}^{r,\rho}\cap \mathbb{B}_F^2\bigr)\leq \epsilon
        \right\}.
    \]
    Thus $U_\epsilon$ consists of matrices in the unit Frobenius ball that can be approximated to accuracy $\epsilon$ by an element of $\mathbb{S}_{B}^{r,\rho}\cap \mathbb{B}_F^2$. If we can show that $\operatorname{Vol}(U_\epsilon)<\operatorname{Vol}(\mathbb{B}_F^1)$, then some matrix in $\mathbb{B}_F^1$ lies outside $U_\epsilon$, giving the desired lower bound.

    Let $\mathcal{N}(\epsilon,\|\cdot\|_F,\mathbb{S}_{B}^{r,\rho}\cap \mathbb{B}_F^2)$ be the covering number of $\mathbb{S}_{B}^{r,\rho}\cap \mathbb{B}_F^2$ in Frobenius norm. If this set is covered by Frobenius balls of radius $\epsilon$, then enlarging those balls to radius $2\epsilon$ covers $U_\epsilon$. Therefore
    \[
        \operatorname{Vol}(U_\epsilon)
        \leq
        \mathcal{N}(\epsilon,\|\cdot\|_F,\mathbb{S}_{B}^{r,\rho}\cap \mathbb{B}_F^2)
        (2\epsilon)^{NN'}
        \operatorname{Vol}(\mathbb{B}_F^1).
    \]
    Thus it is enough to guarantee
    \[
        \mathcal{N}(\epsilon,\|\cdot\|_F,\mathbb{S}_{B}^{r,\rho}\cap \mathbb{B}_F^2)
        (2\epsilon)^{NN'}
        <\frac{1}{2}.
    \]

 It remains to bound the covering number. For each fixed $Q\in\mathbb{Z}^{N\times N'}$ with $\|Q\|_\infty\leq B$, define
    \[
        \mathbb{S}(Q)
        :=
        \left\{
        \delta Q+\Lambda :
        \delta>0,\ 
        \operatorname{rank}(\Lambda)\leq r,\ 
        \|\Lambda\|_F\leq \rho,\ 
        \|\delta Q+\Lambda\|_F\leq 2
        \right\}.
    \]
    Then
    \(
        \mathbb{S}_{B}^{r,\rho}\cap \mathbb{B}_F^2
        =
        \bigcup_{\substack{Q\in\mathbb{Z}^{N\times N'}\\ \|Q\|_\infty\leq B}}
        \mathbb{S}(Q).
    \)
    There are at most $(2B+1)^{NN'}$ such matrices $Q$. Hence
    \[
        \mathcal{N}(\epsilon,\|\cdot\|_F,\mathbb{S}_{B}^{r,\rho}\cap \mathbb{B}_F^2)
        \leq
        (2B+1)^{NN'} T,
    \]
    where $T$ is an upper bound for $\mathcal{N}(\epsilon,\|\cdot\|_F,\mathbb{S}(Q))$.
  For fixed $Q$, every element of $\mathbb{S}(Q)$ can be written as $\delta Q+\Lambda$, with $\Lambda\in\mathbb{S}^{r,\rho}$ and    \[        \|\delta Q\|_F        \leq        \|\delta Q+\Lambda\|_F+\|\Lambda\|_F        \leq        2+\rho.    \]    Thus    \[        \mathbb{S}(Q)        \subseteq        \mathbb{S}^{r,\rho}        +        \{\delta Q:\delta>0,\ \|\delta Q\|_F\leq 2+\rho\}.    \]    The second set is a line segment of length at most $2+\rho$, and \cref{lemma:cover lr mat} controls the first set. Using     \[        \mathcal{N}(\epsilon,\|\cdot\|,A+B)        \leq        \mathcal{N}\left(\frac{\epsilon}{2},\|\cdot\|,A\right)        \mathcal{N}\left(\frac{\epsilon}{2},\|\cdot\|,B\right),    \]    we obtain    \[        \mathcal{N}(\epsilon,\|\cdot\|_F,\mathbb{S}(Q))        \leq        \left\lceil\frac{2+\rho}{\epsilon}\right\rceil        \left(\frac{20\rho}{\epsilon}\right)^{r(N+N'+1)+1}.    \]    
  
  \SZnew{Since $\rho>1/9$ and $\epsilon\leq\rho$, we have $2+\rho+\epsilon\leq2+2\rho\leq 20\rho$}, and therefore    \[        \mathcal{N}(\epsilon,\|\cdot\|_F,\mathbb{S}_{B}^{r,\rho}\cap \mathbb{B}_F^2)        \leq        (2B+1)^{NN'}        \left(\frac{20\rho}{\epsilon}\right)^{r(N+N'+1)+2}.    \]    Substituting this into the preceding sufficient condition, it is enough to require    \[        (2B+1)^{NN'}        \left(\frac{20\rho}{\epsilon}\right)^{r(N+N'+1)+2}        (2\epsilon)^{NN'}        <\frac12.    \]    
  Rearranging and taking logarithms gives the sufficient condition    \[        \log_2\frac{\epsilon}{20\rho}        <        -\frac{1}{NN'-[r(N+N'+1)+2]}        -\frac{NN'\log_2((2B+1)40\rho)}{NN'-[r(N+N'+1)+2]}.    \]    Since    \(       J=\frac{NN'}{r(N+N'+1)+2},    \)    this is implied by    \(       \epsilon        <        20\rho        \left((2B+1)40\rho\cdot 2^{1/(NN')}\right)^{-J/(J-1)}.    \)    Finally, $NN'>28$ implies $40<40\cdot 2^{1/(NN')}<41$. Since $J/(J-1)>1$, it suffices to take    \[        \epsilon        <        20\rho        \left((2B+1)41\rho\right)^{-J/(J-1)}        =        \frac{20}{41}        \cdot        \frac{1}{(2B+1)^{J/(J-1)}(41\rho)^{1/(J-1)}}.    \]    For every $\epsilon$ below this value, $\operatorname{Vol}(U_\epsilon)<\frac12\operatorname{Vol}(\mathbb{B}_F^1)$. Hence some $A^\star\in\mathbb{B}_F^1$ satisfies    \(        \operatorname{dist}\bigl(A^\star,\mathbb{S}_{B}^{r,\rho}\cap\mathbb{B}_F^2\bigr)>\epsilon.    \)    Taking $\epsilon$ arbitrarily close to the displayed threshold proves the claim.
\end{proof}

\begin{remark}\label{why flat}
    The same lower bound remains valid even if the supremum over $\mathbb{B}_F^1$ is restricted to a class of non-spiky matrices. More precisely, for an absolute constant $C$, the same bound holds with $\mathbb{B}_F^1$ replaced by
    \[
        \mathbb{B}_F^1\cap
        \left\{
        A\in\mathbb{R}^{N\times N'} :
        \|A\|_\infty\leq C\frac{\log(NN')}{\sqrt{NN'}}
        \right\}.
    \]

Indeed, let $d=NN'$. After vectorizing matrices in $\mathbb{R}^{N\times N'}$ as vectors in $\mathbb{R}^d$, and rescaling by $\sqrt d$, we obtain
    \[
        \frac{
        \operatorname{Vol}\left(
        \mathbb{B}_F^1\cap
        \left\{
        A\in\mathbb{R}^{N\times N'} :
        \|A\|_\infty\leq C\frac{\log(NN')}{\sqrt{NN'}}
        \right\}
        \right)}
        {\operatorname{Vol}(\mathbb{B}_F^1)}
        =
        \mathbb{P}_{x\sim \operatorname{Unif}(\mathbb{B}(0,\sqrt d))}
        \left(
        \|x\|_\infty\leq C\log d
        \right).
    \]

 The uniform distribution on the Euclidean ball $\mathbb{B}(0,\sqrt d)\subset\mathbb{R}^d$ is subgaussian \cite{vershynin2020high}. Therefore, by choosing $C$ sufficiently large, the probability on the right is at least $1-1/d$, and hence at least $3/4$ when $N,N'\geq 2$. Since the proof above shows that $\operatorname{Vol}(U_\epsilon)<\frac12\operatorname{Vol}(\mathbb{B}_F^1)$, it follows that
    \[
        \operatorname{Vol}(U_\epsilon)
        <
        \frac{2}{3}
        \operatorname{Vol}\left(
        \mathbb{B}_F^1\cap
        \left\{
        A\in\mathbb{R}^{N\times N'} :
        \|A\|_\infty\leq C\frac{\log(NN')}{\sqrt{NN'}}
        \right\}
        \right).
    \]
    Thus this restricted class still contains a matrix at distance larger than $\epsilon$ from $\mathbb{S}_{B}^{r,\rho}\cap\mathbb{B}_F^2$, and the same lower bound holds. In other words, $\mathbb{S}_{B}^{r,\rho}$ is not dense even in a non-spiky subset of the unit Frobenius ball. Non-spiky matrices are especially relevant in post training quantization. A recent line of work applies preprocessing transformations that exploit invariances in the computation graph, often of the form $XW=(XT^{-1})(TW)$. The most common examples are rotations, where $T^{-1}=T^\top$, including Hadamard rotations \cite{ashkboos2024quarot}, random rotations \cite{chee2023quip}, and learned rotations \cite{liu2024spinquant}. These transformations are designed to reduce the dynamic range of $W$, thereby promoting flatness, or incoherence, of the transformed weights. Post training compression is then applied to the transformed matrix $TW$.

    With a  more refined volume estimate, we can strengthen this observation to an even flatter class of matrices in the following corollary. The proof is deferred to \cref{sec:proofs}.

\end{remark}

\begin{corollary}[Information Theoretic Lower Bound for Non-spiky Matrices]\label{thm:fancy lb corollary}
    Let
    \[
        \mathbb{B}_{\mathrm{flat}}^1
        :=
        \mathbb{B}_F^1
        \cap
        \left\{
        A\in\mathbb{R}^{N\times N'} :
        \|A\|_\infty\leq \frac{\sqrt{3}}{\sqrt{NN'}}
        \right\}.
    \]
    Assume that $r(N+N'+1)+2 < NN'$, $NN'\geq 144$, and \SZnew{$\rho>\frac{1}{9}$}. Then
    \[
        \sup_{A\in\mathbb{B}_{\mathrm{flat}}^1}
        \inf_{M\in\mathbb{S}_{B}^{r,\rho}\cap \mathbb{B}_F^2}
        \|M-A\|_F
        \geq
        \frac{20}{41}
        \cdot
        \frac{1}{(2B+1)^{\frac{J}{J-1}}
        \left(41\sqrt{\frac{\pi e}{6}}\rho\right)^{\frac{1}{J-1}}},
    \]
    where
    \(
        J=J(r,N,N')=\frac{NN'}{r(N+N'+1)+2}\in(1,\infty).
    \)
\end{corollary}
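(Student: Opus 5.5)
The plan is to rerun the volumetric argument from the proof of \cref{thm:fancy lower bound}. The only change is the comparison volume: instead of $\operatorname{Vol}(\mathbb{B}_F^1)$, I compare against $\operatorname{Vol}(\mathbb{B}_{\mathrm{flat}}^1)$. Let $d=NN'$ and $k=r(N+N'+1)+2$, and define
\[
U_\epsilon^{\mathrm{flat}}:=\{A\in\mathbb{B}_{\mathrm{flat}}^1:\operatorname{dist}(A,\mathbb{S}_{B}^{r,\rho}\cap\mathbb{B}_F^2)\le\epsilon\}.
\]
Exactly as before, $\operatorname{Vol}(U_\epsilon^{\mathrm{flat}})\le \mathcal{N}(\epsilon)\,(2\epsilon)^d\operatorname{Vol}(\mathbb{B}_F^1)$. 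The same covering estimate applies unchanged, namely $\mathcal{N}(\epsilon)\le(2B+1)^d(20\rho/\epsilon)^k$, using \cref{lemma:cover lr mat} together with the $\rho>1/9$ step. It therefore suffices to show
\[
(2B+1)^d\Big(\frac{20\rho}{\epsilon}\Big)^k(2\epsilon)^d\,\frac{\operatorname{Vol}(\mathbb{B}_F^1)}{\operatorname{Vol}(\mathbb{B}_{\mathrm{flat}}^1)}<1 .
\]

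The key new ingredient is a lower bound on $\operatorname{Vol}(\mathbb{B}_{\mathrm{flat}}^1)/\operatorname{Vol}(\mathbb{B}_F^1)$. Write $\mathbb{B}_{\mathrm{flat}}^1=\mathbb{B}_F^1\cap C_d$, where $C_d=[-\sqrt{3/d},\sqrt{3/d}]^d$, so that $\operatorname{Vol}(C_d)=(12/d)^{d/2}$. Then
\[
\operatorname{Vol}(\mathbb{B}_{\mathrm{flat}}^1)=\operatorname{Vol}(C_d)\,\mathbb{P}_{x\sim\operatorname{Unif}(C_d)}(\|x\|_2\le 1).
\]
For $x$ uniform on $C_d$, $\|x\|_2^2=\sum_i Y_i$ is a sum of i.i.d.\ bounded variables $Y_i=3U_i^2/d$ with $U_i\sim\operatorname{Unif}[-1,1]$. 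Its mean is exactly $1$, since $\mathbb{E}U^2=1/3$. So the probability in question is $\mathbb{P}(\sum_i Y_i\le \mathbb{E}\sum_i Y_i)$. By Berry--Esseen, with explicit moments of $U^2$ and $d\ge144$, this probability is at least an absolute constant $c$, say $c\ge 1/4$.

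For the ball, Stirling's bound $\Gamma(d/2+1)\ge\sqrt{\pi d}\,(d/(2e))^{d/2}$ gives $\operatorname{Vol}(\mathbb{B}_F^1)\le (2\pi e/d)^{d/2}/\sqrt{\pi d}$. Combining the two estimates,
\[
\frac{\operatorname{Vol}(\mathbb{B}_F^1)}{\operatorname{Vol}(\mathbb{B}_{\mathrm{flat}}^1)}\le \frac{1}{c\sqrt{\pi d}}\Big(\frac{\pi e}{6}\Big)^{d/2}.
\]
The factor $1/(c\sqrt{\pi d})$ is at most $1$ once $d\ge144$ and $c\ge1/4$, so it can simply be dropped. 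The extra factor $(\pi e/6)^{d/2}$ merges into $(2\epsilon)^d$, giving $(2\sqrt{\pi e/6}\,\epsilon)^d$.

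The sufficient condition then reads
\[
(2B+1)^d\Big(\frac{20\rho}{\epsilon}\Big)^k\big(2\sqrt{\pi e/6}\,\epsilon\big)^d<1 .
\]
I solve it for $\epsilon$ exactly as in the proof of \cref{thm:fancy lower bound}, writing $\epsilon=20\rho\,t$ and taking logarithms. This yields a threshold of the form
\[
20\rho\,\big((2B+1)\,40\sqrt{\pi e/6}\,\rho\cdot 2^{O(1/d)}\big)^{-J/(J-1)}.
\]
The slack factor $2^{O(1/d)}$ is absorbed by replacing $40$ with $41$, valid since $d\ge144$. Rewriting $20\rho\,(41 a\rho)^{-J/(J-1)}$ with $a=\sqrt{\pi e/6}$, we have $\rho^{1-J/(J-1)}=\rho^{-1/(J-1)}$, which gives the stated form with $(41\sqrt{\pi e/6}\,\rho)^{1/(J-1)}$ in the denominator. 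I still need to check how the leftover power $a^{-1}$ is accounted for in the prefactor $20/41$; I expect it can be absorbed using $a>1$ and the slack already present.

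The main obstacle is the second step: getting a clean, explicit constant lower bound for $\mathbb{P}(\|x\|_2\le1)$ when $x$ is uniform on the cube, valid for all $d\ge144$. If Berry--Esseen constants prove awkward, a fallback is to shrink the cube slightly to half-side $\sqrt{3(1-\eta)/d}$ with $\eta\asymp d^{-1/2}$ and apply Hoeffding to get probability at least $1/2$. The resulting volume loss $(1-\eta)^{d/2}=e^{-O(\sqrt d)}$ is subexponential, but it would not be absorbed as cleanly into the constant $41$, so the CLT route is preferable.
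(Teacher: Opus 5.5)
You are following the paper's proof almost line for line. You compare against $\operatorname{Vol}(\mathbb{B}_{\mathrm{flat}}^1)$, write the flat ball as cube $\cap$ ball, apply Berry--Esseen to $\sum_i U_i^2$, use Stirling for the ball, and reuse the covering bound. Your estimate $\operatorname{Vol}(\mathbb{B}_F^1)/\operatorname{Vol}(\mathbb{B}_{\mathrm{flat}}^1)\le\frac{1}{c\sqrt{\pi d}}(\pi e/6)^{d/2}$ is the paper's.

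The step you left open, however, is a real gap, and your proposed remedy points the wrong way. Put $a=\sqrt{\pi e/6}\approx1.193$ and $\epsilon'=a\epsilon$. Your sufficient condition becomes exactly the condition in the proof of \cref{thm:fancy lower bound} with $\rho$ replaced by $a\rho$. Solving it the same way gives
\[
\epsilon<20\rho\bigl(41a\rho\,(2B+1)\bigr)^{-\frac{J}{J-1}}
=\frac{20}{41a}\cdot\frac{1}{(2B+1)^{\frac{J}{J-1}}(41a\rho)^{\frac{1}{J-1}}}
=\frac{20}{41}\cdot\frac{1}{\bigl(a(2B+1)\bigr)^{\frac{J}{J-1}}(41\rho)^{\frac{1}{J-1}}}.
\]
This is the stated bound divided by $a$. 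Because $a>1$, the leftover factor makes the bound smaller, so it cannot be absorbed.

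The available slack does not close the gap uniformly. Working with $40$ instead of $41$ (your version has no factor $\tfrac12$) improves the threshold by at most $(41/40)^{J/(J-1)}$. That exceeds $a$ only when $J\lesssim1.16$. The polynomial factor $c\sqrt{\pi d}$ from the volume ratio enters $\epsilon$ only through a $(d-k)$-th root, which tends to $1$. Concretely, take $N=N'\to\infty$ with $r$ fixed, so $J\to\infty$. Your sufficient condition can then hold only for $\epsilon\lesssim\frac{1}{2a(2B+1)}\approx\frac{0.419}{2B+1}$. The claimed bound, by contrast, tends to $\frac{20}{41(2B+1)}\approx\frac{0.488}{2B+1}$.

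The paper's own proof makes the same jump. Its final line inserts the factor $\sqrt{\pi e/6}$ into the theorem's threshold but keeps the prefactor $\frac{20}{41}$, without tracking the resulting $1/\sqrt{\pi e/6}$. What this volumetric argument (yours or the paper's) actually proves is the corollary with $2B+1$ replaced by $\sqrt{\pi e/6}\,(2B+1)$, i.e.\ the last expression in the display above. You should therefore either state that corrected constant or supply a genuinely sharper covering or volume estimate; rearranging the present bookkeeping will not produce $\frac{20}{41}$. The discrepancy is an absolute constant. It does not affect the scaling used in \cref{thm:data lower bound} or in the near-optimality comparison, though the constant in \cref{thm:data lower bound} should be adjusted in the same way.
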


As a direct consequence, we obtain the corresponding lower bound for layer-wise reconstruction error.

\begin{corollary}\label{thm:data lower bound}
    Let $X\in\R^{m\times N}$, with $m>N$, and assume that $\sigma_{\min}(X)>0$. Then there exists $A\in\R^{N\times N'}$ satisfying $\|A\|_F\leq1$ and $\|A\|_\infty\leq \sqrt{3}/\sqrt{NN'}$ such that
    \[
        \inf_{M\in\mathbb{S}_{B}^{r,\rho}\cap \mathbb{B}_F^2}
        \|X(M-A)\|_F
        \geq
        \frac{20}{41}
        \cdot
        \sigma_{\min}(X)
        \cdot
        \frac{1}{(2B+1)^{\frac{J}{J-1}}
        \left(41\sqrt{\frac{\pi e}{6}}\rho\right)^{\frac{1}{J-1}}}.
    \]
\end{corollary}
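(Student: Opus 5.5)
The plan is to reduce to \cref{thm:fancy lb corollary} via a pointwise inequality on the data matrix. Since $m>N$ and $\sigma_{\min}(X)>0$, the matrix $X$ has full column rank. Hence for every $D\in\R^{N\times N'}$ we have, column by column,
\[
\|XD\|_F^2=\sum_{j}\|XD_j\|^2\geq \sigma_{\min}(X)^2\sum_j\|D_j\|^2=\sigma_{\min}(X)^2\|D\|_F^2 .
\]
This follows from the variational characterization $\|Xv\|\geq\sigma_{\min}(X)\|v\|$ for $v\in\R^N$. Applying it with $D=M-A$ gives $\|X(M-A)\|_F\geq\sigma_{\min}(X)\|M-A\|_F$ for every $M$ and $A$. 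Taking the infimum over $M\in\mathbb{S}_{B}^{r,\rho}\cap\mathbb{B}_F^2$ then yields
\[
\inf_M\|X(M-A)\|_F\geq\sigma_{\min}(X)\inf_M\|M-A\|_F .
\]

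Next I would pick $A$. By \cref{thm:fancy lb corollary}, whose standing hypotheses ($r(N+N'+1)+2<NN'$, $NN'\geq144$, $\rho>1/9$) I take to be implicitly in force here, the supremum over $\mathbb{B}_{\mathrm{flat}}^1$ of $\inf_M\|M-A\|_F$ is at least the stated quantity $\beta$.

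To get an actual element attaining the bound rather than only approaching it, I would use the proof of that corollary rather than its statement. That proof, like the proof of \cref{thm:fancy lower bound}, shows that for every $\epsilon<\beta$ the set $U_\epsilon$ misses part of $\mathbb{B}_{\mathrm{flat}}^1$. An alternative is a compactness argument. The function $A\mapsto\operatorname{dist}(A,\mathbb{S}_{B}^{r,\rho}\cap\mathbb{B}_F^2)$ is $1$-Lipschitz, hence continuous, and $\mathbb{B}_{\mathrm{flat}}^1$ is compact. So the supremum is attained at some $A$ with $\|A\|_F\leq1$ and $\|A\|_\infty\leq\sqrt3/\sqrt{NN'}$, and this $A$ satisfies $\operatorname{dist}(A,\cdot)\geq\beta$. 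Combining this with the first step gives the claimed bound with the factor $\sigma_{\min}(X)$.

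The only potential obstacle is this attainment issue, and the continuity/compactness argument resolves it cleanly. The rest is a one-line singular-value inequality. The hypothesis $m>N$ is used only to ensure that $\sigma_{\min}(X)$ is the $N$-th singular value, so the inequality $\|Xv\|\geq\sigma_{\min}(X)\|v\|$ holds on all of $\R^N$.
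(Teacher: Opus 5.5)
Your proposal is correct and is the argument the paper intends when it calls this a direct consequence of \cref{thm:fancy lb corollary}: the bound $\|X(M-A)\|_F\geq\sigma_{\min}(X)\|M-A\|_F$ transfers the Frobenius-distance lower bound to the data-weighted objective. Your compactness/Lipschitz argument for attaining the supremum over $\mathbb{B}_{\mathrm{flat}}^1$ fills in a detail the paper leaves implicit, and so does your remark that the hypotheses of \cref{thm:fancy lb corollary} must be tacitly assumed.
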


\section{GPTQ-intrinsic LoRA}
Having presented lower bounds associated with solving \eqref{eq:layer_err}, we now propose and analyze a novel algorithm for solving it, {GPTQ-intrinsic LoRA}. To that end, we first provide a brief overview of GPTQ in \cref{sec:gptq intro}, recalling its development from early pruning methods to its current modern formulation. After that, we  formally introduce our method in \cref{sec:method} and provide upper bounds on its associated layer-wise reconstruction error assuming an infinite alphabet. We also provide, in \cref{sec:upper bound}, a remark on interpreting the bounds under a finite alphabet. Finally, we compare the upper bounds in \cref{sec:upper bound} with the information-theoretic lower bounds established in \cref{sec:lower bound} and demonstrate that GPTQ-intrinsic LoRA is near-optimal.

\subsection{An  Introduction to GPTQ}\label{sec:gptq intro}
\paragraph{From pruning to GPTQ.} As the name suggests, our GPTQ-intrinsic LoRA builds upon GPTQ, a widely used baseline in many recent works on PTQ. GPTQ and related algorithms \cite{frantar2022optimal, frantar2023sparsegpt, frantar2022gptq} are developed from an older framework that traces back to the Optimal Brain Damage (OBD, \citet{lecun1989optimal}) and Optimal Brain Surgery (OBS, \citet{hassibi1993optimal}) algorithms, originally designed for network pruning. In OBS, pruning is performed iteratively by solving a small optimization problem at each step. Denoting the “Hessian” by $H = X^\top X$ and letting $\delta_w$ represent the update to the weight vector, the OBS pruning step solves
\[
\underset{\delta_w}{\min} \ \frac{1}{2} \delta_w^\top H \delta_w \quad \text{subject to} \quad e_p^\top \delta_w + w_p = 0, \quad \delta_w|_F = 0,
\]
where $e_p$ is the standard basis vector selecting the $p$-th coordinate to prune, and $\delta_w|_F = 0$ enforces no change to already-fixed coordinates indexed by $F$ (see \cite{hassibi1992second}). This paradigm was adapted to PTQ in Optimal Brain Quantization (OBQ, \citet{frantar2022optimal}). Similar to OBS,  each step of OBQ involves solving
\[
\underset{q \in \mathcal{A}}{\min} \left\{ \ \underset{\delta_w}{\min} \ \frac{1}{2} \delta_w^\top H \delta_w \ \text{subject to} \ e_p^\top \delta_w + w_p = q, \quad \delta_w|_F = 0 \ \right\},
\]
where $\mathcal{A}$ is the quantization alphabet.

In the pruning case, the constrained quadratic problem admits a closed-form solution via the stationary point of its Lagrangian. In the quantization setting, the inner minimization problem remains convex and can be solved in exactly the same way. What is new in quantization is that one must evaluate the objective over all possible choices of $q$ in a discrete set $\mathcal{A}$ and select the minimizer to solve the outer minimization problem. This leads to a local greedy algorithm that alternates between quantizing a coordinate of $w$ and updating the remaining unquantized coordinates to compensate for the induced error, until all the coordinates are quantized. This greedy strategy is applied to all the weight vectors (\textit{i.e.} columns of $W$) in a layer in parallel and repeated layer-wise sequentially. 

Following OBQ, GPTQ (\cref{GPTQ}, \citet{frantar2022gptq}) leverages state-of-the-art Cholesky kernels to improve efficiency and stability, enabling it to scale to very large LLMs. Specifically, GPTQ uses the lower-triangular Cholesky factor $\Psi$ in the decomposition $H^{-1}=\Psi\Psi^T$ in place of $H^{-1}$ itself as it gives a computationally equivalent output when the Cholesky decomposition exists. A ``dampening" term $\lambda I$ is added to $X^TX$ when computing the inverse Hessian to mitigate numerical instability\footnote{Another implementation detail, called lazy batch updates in \cite{frantar2022gptq}, involves processing the weights in blocks of size $B$ to enhance the compute-to-memory-access ratio while preserving the algorithm's mathematical equivalence to the $B=1$ case. We always consider $B=1$ in this paper.}.

\paragraph{GPTQ notation and error diffusion.}
We now fix notation for the GPTQ iterations applied to a single weight vector $w=(w_1,\ldots,w_N)^\top\in\mathbb{R}^N$ using a calibration matrix $X$. Since GPTQ processes the columns of $W$ independently in parallel, it suffices to describe the iteration for one column $w$. Recall that the rows of $X\in\mathbb{R}^{m\times N}$ correspond to calibration samples, such as tokens. We write the feature-wise decomposition of $X$ as $X=\begin{pmatrix}X_1 & \cdots & X_N\end{pmatrix}$, where $X_j\in\mathbb{R}^m$ denotes the $j$-th feature column.

\begin{algorithm}[t]
\caption{GPTQ: Quantize a layer \( W \) to \( Q \) using $H=X^\top X$ with regularization constant $\lambda$}
\begin{algorithmic}[1]\label{GPTQ}
\STATE  \( (H+\lambda I)^{-1}=\Psi\Psi^\top \) \hfill {Perform Cholesky decomposition}
\FOR{every column $w$ in $W$ (in parallel)}
\STATE $w^{(0)} {=w}$
\FOR{$t = 1$ to $N$} 
        \STATE
         $q_{t}=\mathcal{Q}(w^{(t-1)}_t)$ 
        \STATE \( w^{(t)}_{\geq t+1}= w^{(t-1)}_{\geq t+1}+(q_{t}-w^{(t-1)}_t)\frac{\Psi_{\geq t+1,t}}{\Psi_{tt}}  \)  
    \ENDFOR
\ENDFOR
\STATE \textbf{return} every \( q \) in \( Q \) 
\end{algorithmic}
\end{algorithm}

At each step $t$, GPTQ rounds the current weight entry using the scalar quantizer $\mathcal{Q}:\mathbb{R}\to\mathcal{A}$, setting $q_t=\mathcal{Q}(w_t^{(t-1)})$. It then updates the remaining unquantized entries  as in line 6 of \cref{GPTQ}. It was shown in \citet{zhang2025qronos} that this update is equivalent to
\[
w^{(t)}_{\geq t+1}
=
\underset{(v_{t+1}, \dots, v_N) \in \mathbb{R}^{N - t}}{\arg\min}
\, \frac{1}{2}
\left\|
(q_t - w^{(t-1)}_t) X_t
+
\sum_{j = t+1}^{N} (v_j - w^{(t-1)}_j) X_j
\right\|^2.
\]
Here $(q_t - w^{(t-1)}_t)X_t$ is the error introduced by rounding the $t$-th entry. The update from $w^{(t-1)}_{\geq t+1}$ to $w^{(t)}_{\geq t+1}$ adjusts the remaining weights to minimize the resulting distortion in $\ell_2$, a process referred to as error diffusion in \cite{zhang2025qronos}. We denote the full state of the algorithm after step $t$ by
\[
w^{(t)} = (q_{\leq t}, w^{(t)}_{\geq t+1})\in \mathcal{A}^{t} \times \mathbb{R}^{N-t},
\]
with initialization $w^{(0)}=w\in\mathbb{R}^N$ and final output $w^{(N)}=q\in\mathcal{A}^N$. We also define the reconstruction error after step $t$ by
\begin{equation}\label{eq:err_def}
{\bf e}_t
=
Xw - Xw^{(t)}
=
Xw
-
\sum_{j=1}^{t} q_j X_j
-
\sum_{j=t+1}^{N} w^{(t)}_j X_j.
\end{equation}

Finally, applying GPTQ with regularization parameter $\lambda>0$, meaning that $\lambda I$ is added to the Hessian $H=X^\top X$, is equivalent to applying unregularized GPTQ to the appended calibration matrix
\[
X_{\mathrm{tall}}
=
\begin{pmatrix}
X \\
\sqrt{\lambda} I
\end{pmatrix}.
\]
Indeed, $X^\top X+\lambda I=X_{\mathrm{tall}}^\top X_{\mathrm{tall}}$. In particular, $X_{\mathrm{tall}}$ has full column rank, regardless of whether $X$ itself is full rank or whether $m\geq N$.

\subsection{The GPTQ-intrinsic LoRA Construction}\label{sec:method}

Unlike existing methods that treat low-rank adaptation as a separate step after PTQ \cite{zhang2024qera, saha2024compressing, deng2025cloq}, our approach incorporates the low-rank correction directly into a GPTQ-style quantization pass. More precisely, for a chosen left factor $L\in\R^{N\times r}$, we construct $Q\in\A^{N\times N'}$ and $R\in\R^{r\times N'}$ jointly, using a completely training-free procedure aimed at reducing the layer-wise reconstruction error $\|XW-X(Q+LR)\|_F^2$.

For any fixed $L$, define $\widehat X=XL$. Then the objective can be rewritten as
\[
\|XW-X(Q+LR)\|_F^2
=
\|XW-(XQ+\widehat X R)\|_F^2
=
\left\|
\left[\begin{array}{c:c}
X & \widehat X
\end{array}\right]
\left[\begin{array}{c}
W \\ \hdashline
0
\end{array}\right]
-
\left[\begin{array}{c:c}
X & \widehat X
\end{array}\right]
\left[\begin{array}{c}
Q \\ \hdashline
R
\end{array}\right]
\right\|_F^2.
\]
Thus, once $L$ is fixed, the low-rank correction can be viewed as adding $r$ extra full-precision columns to the calibration matrix and $r$ corresponding full-precision rows to the quantized weight representation. This reformulation is visualized in \cref{fig:equivalence}.

\begin{figure}[t]
    \centering
    \vspace{-1em}\includegraphics[width=.86\linewidth]{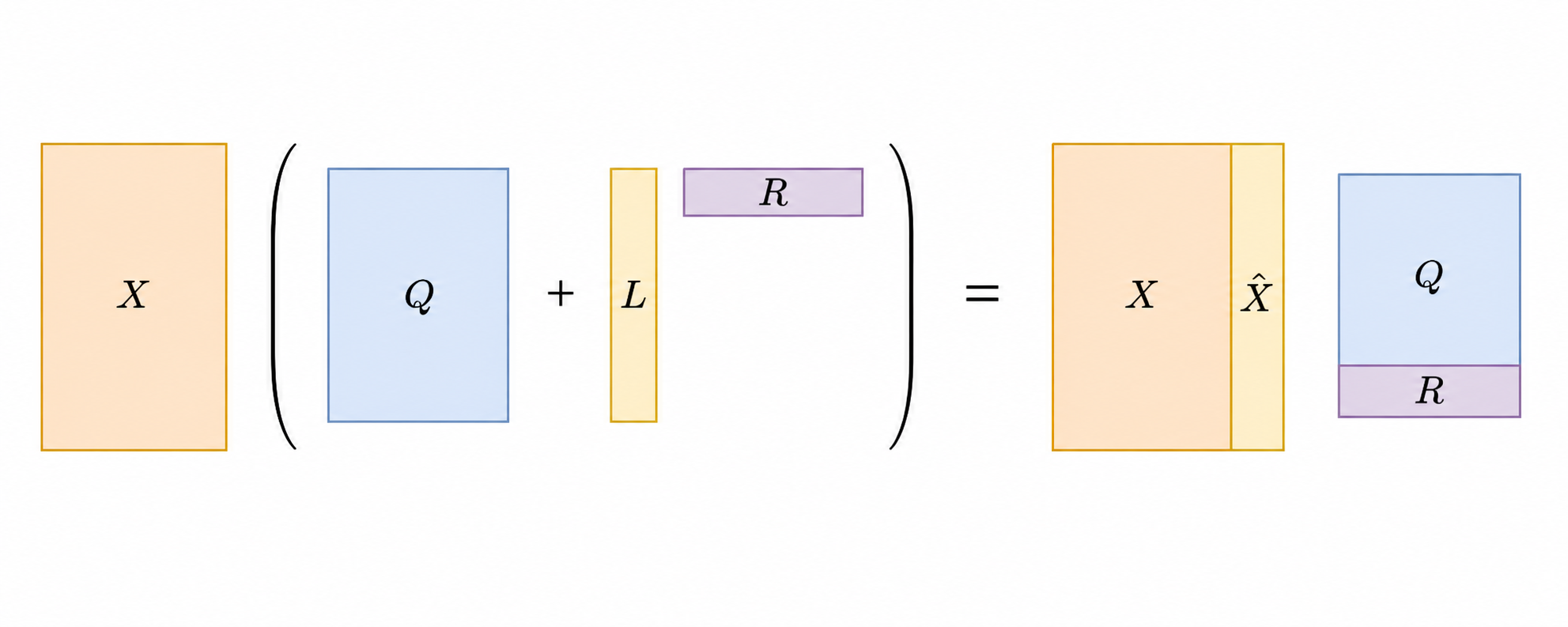}
    \vspace{-3em}
    \caption{Augmenting the data and weight matrices}
    \label{fig:equivalence}
\end{figure}

\paragraph{Generating $R$ given $L$.}
Define the augmented calibration matrix and  the augmented weight matrix 
\[
\mathbb{X}
=
\left[\begin{array}{c:c}
X & \widehat X
\end{array}\right]
\in\R^{m\times (N+r)} \quad \text{ and }
\quad 
\mathbb{W}
=
\left[\begin{array}{c}
W \\ \hdashline
0
\end{array}\right]
\in\R^{(N+r)\times N'}.
\]
Our method entails applying GPTQ to $\mathbb{W}$ using the augmented Hessian $\mathbb{H}=\mathbb{X}^\top\mathbb{X}$, together with the usual dampening term. However, we run GPTQ for only $N$ iterations, so that only the first $N$ rows of $\mathbb{W}$ are quantized. The result is a quantized matrix $Q\in\A^{N\times N'}$, together with a full-precision matrix $R\in\R^{r\times N'}$. Since $R$ is initialized at zero and updated throughout the $N$ GPTQ steps, it absorbs the errors generated while sequentially quantizing $W$ into $Q$. For this reason, we refer to $R$ as the error absorption factor.

\paragraph{Selecting $L$.}It remains to choose the left factor $L$. Our choice is motivated by the fact that, in the optimal low-rank compensation procedure of \cref{OLrC}, the column space of the left low-rank factor is spanned by eigenvectors of $X^\top X$, equivalently by right singular vectors of $X$. In addition, $L$ acts as a dimension reduction map on the features of the calibration matrix $X$. It is therefore natural to choose $L$ using principal component analysis. Specifically, if $X=U\Sigma V^\top$ is a singular value decomposition of $X$, we set $L=V_r\in\R^{N\times r}$, where $V_r$ contains the first $r$ columns of $V$. We refer to $L$ as the feature extraction factor.

\begin{figure}
    \centering
    \includegraphics[width=0.8\linewidth]{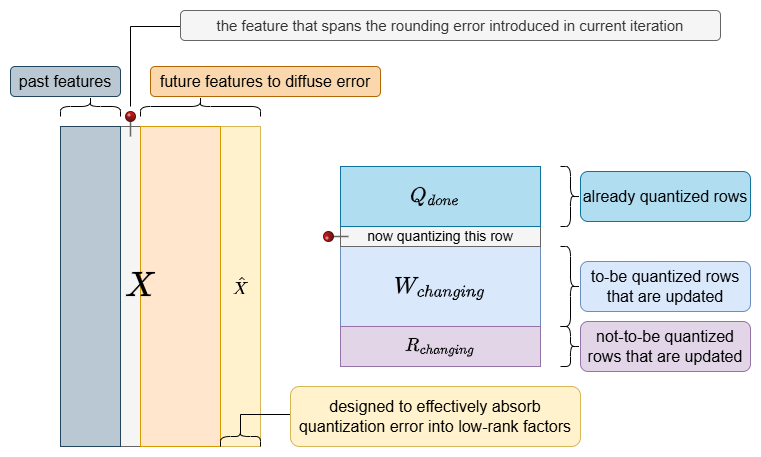}
    \caption{GPTQ-intrinsic LoRA}
    \label{fig:gptqlora}
\end{figure}

With this choice of $L$, we obtain $Q$ and $R$ by running GPTQ for $N$ iterations on the augmented matrix  $\left[\begin{array}{c}
W \\ \hdashline
0
\end{array}\right]$ producing the `mixed-precision' $\left[\begin{array}{c}
Q \\ \hdashline
R
\end{array}\right]$ as described above. A schematic summary of the method is given in \cref{fig:gptqlora}

\paragraph{Hessian-only implementation.}
In many open-source implementations of PTQ algorithms, the calibration data is only used to generate the Hessian $H=X^\top X$ (or its regularized version), for example as a sum of outer products of the samples with themselves. The calibration matrix $X$ itself is not explicitly formed, in order to reduce peak memory usage.
Our construction can be implemented in this setting as well. If $X=U\Sigma V^\top$ is an SVD of $X$, then $H=V\Sigma^2V^\top$, so the choice $L=V_r$ can be obtained directly from the eigen-decomposition of $H$. The augmented Hessian can then be formed as
\[
\mathbb{H}
=
\left[\begin{array}{c:c}
H & HV_r \\ \hdashline
V_r^\top H & V_r^\top H V_r
\end{array}\right],
\]
without explicitly forming either $X$ or $\mathbb{X}$.  We give pseudo code for GPTQ-intrinsic LoRA in \cref{lora_in_GPTQ}.

\begin{remark}[Stable factorization of the augmented Hessian]
Because $\widehat X=XL$, the columns of the augmented calibration matrix $\mathbb{X}=[X\ \widehat X]$ are linearly dependent whenever $r>0$.
Consequently, the augmented Hessian $\mathbb{H}=\mathbb{X}^\top\mathbb{X}$ is singular, even when $H=X^\top X$ is invertible.  Thus, standard GPTQ implementations that compute a Cholesky factor of $(\mathbb{H}+\lambda I)^{-1}$ may become numerically unstable unless one uses a larger regularization parameter $\lambda$. In \cref{lora_in_GPTQ}, we instead use a more robust procedure for computing the required lower-triangular factor $\mathbb{L}$ such that $(\mathbb{H}+\lambda I)^{-1}=\mathbb{L}\mathbb{L^\top}$. Specifically, we use an eigen-decomposition together with a QR decomposition to obtain the factor that would ordinarily be produced by a Cholesky decomposition. This allows us to use smaller values of $\lambda$.  Further implementation details are given in \cref{sec:better cholesky}. We emphasize that as \cref{lora_in_GPTQ} uses only eigen-decompositions and QR decompositions, it is therefore both gradient-free and SVD-free, despite using a LoRA-style parameterization.
\end{remark}

\begin{algorithm}[ht]
\caption{GPTQ-intrinsic LoRA using $H=X^\top X$}
\label{lora_in_GPTQ}
\begin{algorithmic}[1]
\STATE\textbf{Input:} Hessian $H=X^\top X$, weight matrix $W\in\mathbb{R}^{N\times N'}$, rank $r$, regularization parameter $\lambda$, quantizer $\mathcal{Q}$
\STATE Compute $H=VSV^\top$ and set $L=V_r$
\STATE Form the augmented Hessian \qquad\qquad\qquad\quad
\(
    \mathbb{H}
    =
    \begin{bmatrix}
        H & HV_r \\
        V_r^\top H & V_r^\top H V_r
    \end{bmatrix}
\)
\STATE Form the augmented weight matrix
\qquad \qquad \(
    \mathbb{W}
    =
    \begin{bmatrix}
        W\\
        0_{r\times N'}
    \end{bmatrix}
\)
\STATE Compute $\mathbb{L}$ such that $(\mathbb{H}+\lambda I)^{-1}=\mathbb{L}\mathbb{L}^\top$ using the procedure in \cref{sec:better cholesky}

\FOR{each column $\mathbbm{w}$ of $\mathbb{W}$, in parallel}
    \STATE Set $\mathbbm{w}^{(0)}=\mathbbm{w}$ and initialize $\mathbf{q}=0_N$
    \FOR{$t=1,\ldots,N$}
        \STATE $q_t=\mathcal{Q}\bigl(\mathbbm{w}^{(t-1)}_t\bigr)$
        \STATE
        \(
            \mathbbm{w}^{(t)}_{\geq t+1}
            =
            \mathbbm{w}^{(t-1)}_{\geq t+1}
            +
            \bigl(q_t-\mathbbm{w}^{(t-1)}_t\bigr)
            \frac{\mathbb{L}_{\geq t+1,t}}{\mathbb{L}_{tt}}
        \)
    \ENDFOR
    \STATE Set $\mathbf{r}=\mathbbm{w}^{(N)}_{\geq N+1}$
\ENDFOR

\STATE \textbf{return} $Q$, $R$, and $L=V_r$
\end{algorithmic}
\end{algorithm}

\subsection{Upper Bounds on Layer-wise Reconstruction Error}\label{sec:upper bound}

In this section, we prove upper bounds on the layer-wise reconstruction error $\|XW-X(Q+LR)\|_F^2$ for the choice $L=V_r$, together with the matrices $Q$ and $R$ produced by \cref{lora_in_GPTQ}.

The algorithm uses the regularized augmented Hessian $\mathbb{H}+\lambda I_{N+r}=\mathbb{X}^\top\mathbb{X}+\lambda I_{N+r}$. As noted in \cref{sec:gptq intro}, this is equivalent to applying the unregularized algorithm to the appended augmented data matrix
\[
    \mathbb{X}_+
    :=
    \begin{pmatrix}
        \mathbb{X}\\
        \sqrt{\lambda} I_{N+r}
    \end{pmatrix}
    =
    \begin{pmatrix}
        X & \widehat X\\
        \sqrt{\lambda} I_1 & \sqrt{\lambda} I_2
    \end{pmatrix},
\]
since $\mathbb{X}_+^\top\mathbb{X}_+=\mathbb{H}+\lambda I_{N+r}$. Here
\[
    I_1=
    \begin{bmatrix}
        \SZnew{I_{N\times N}}\\
        0_{r\times N}
    \end{bmatrix}
    \in\mathbb{R}^{(N+r)\times N},
    \qquad
    I_2=
    \begin{bmatrix}
        0_{N\times r}\\
        \SZnew{I_{r\times r}}
    \end{bmatrix}
    \in\mathbb{R}^{(N+r)\times r}.
\]
We also denote the two column blocks of $\mathbb{X}_+$ by
\[
    X_+
    :=
    \begin{pmatrix}
        X\\
        \sqrt{\lambda}I_1
    \end{pmatrix}
    \in\mathbb{R}^{(m+N+r)\times N},
    \qquad
    \widehat X_+
    :=
    \begin{pmatrix}
        \widehat X\\
        \sqrt{\lambda}I_2
    \end{pmatrix}
    \in\mathbb{R}^{(m+N+r)\times r}.
\]
Since $X_+^\top X_+=X^\top X+\lambda I_N$, the matrices $X^\top X$ and $X_+^\top X_+$ have the same eigenvectors. Hence $X$ and $X_+$ have the same right singular vectors. In particular, if $X=U\Sigma V^\top$ is an SVD of $X$, then $X_+=U_+\Sigma_+V^\top$, where $\Sigma_+$ has diagonal entries $(\sigma_i^2+\lambda)^{1/2}$.

Because $\lambda>0$, the matrix $\mathbb{X}_+$ has full column rank. We may therefore invoke the following GPTQ error evolution lemma, which is the main ingredient in the reconstruction error bounds below.

\begin{lemma}[\citet{zhang2025provable}]\label{lemma:err evolution}
Let $X \in \mathbb{R}^{m \times N}$ be full rank with $m \geq N$, and let $w \in \mathbb{R}^N$. Running GPTQ with $H = X^\top X$ using an infinite alphabet $\mathcal{A}=\mathcal{A}^{\delta}= \{ \pm k\delta : k \in \mathbb{Z} \}$, the error defined in \eqref{eq:err_def} satisfies
\begin{align}
    {\bf e}_t &= P_{X_{\geq t+1}^{\perp}}(w^{(t-1)}_t - q_t) X_t + {\bf e}_{t-1} \text{\quad and \quad } {\bf e}_N =\sum_{j=1}^{N} P_{X_{\geq j+1}^{\perp}}(w^{(j-1)}_j - q_j) X_j.\label{final error}
\end{align}
Moreover, the resulting quantized vector $q$ satisfies
\begin{gather}
\|X\mathbf{w}-X\mathbf{q}\|_2^2 = \sum_{j=1}^N |w^{(j-1)}_j - q_j|^2 \, \|P_{X_{\geq j+1}^{\perp}} X_j\|_2^2. 
\end{gather}
\end{lemma}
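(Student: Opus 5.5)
We prove \cref{lemma:err evolution} by induction on $t$. The key is a closed form for the GPTQ update as an orthogonal projection of the rounding error onto the span of the not-yet-quantized columns. We use the characterization from \citet{zhang2025qronos} recalled in \cref{sec:gptq intro}: after rounding $q_t$, the new tail $w^{(t)}_{\geq t+1}$ minimizes
\[
\bigl\|(q_t-w^{(t-1)}_t)X_t+\textstyle\sum_{j\geq t+1}(v_j-w^{(t-1)}_j)X_j\bigr\|^2 .
\]
Since $X$ has full column rank, $X_{\geq t+1}$ has full column rank and the minimizer is unique. By the normal equations,
\[
\textstyle\sum_{j\geq t+1}(w^{(t)}_j-w^{(t-1)}_j)X_j=-P_{X_{\geq t+1}}(q_t-w^{(t-1)}_t)X_t .
\]

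For the recursion, write
\[
{\bf e}_t-{\bf e}_{t-1}=Xw^{(t-1)}-Xw^{(t)} .
\]
The states $w^{(t-1)}$ and $w^{(t)}$ agree in the first $t-1$ coordinates, differ in coordinate $t$ by $w^{(t-1)}_t-q_t$, and differ in the tail as computed above. Combining these,
\[
{\bf e}_t-{\bf e}_{t-1}=(w^{(t-1)}_t-q_t)X_t-P_{X_{\geq t+1}}(w^{(t-1)}_t-q_t)X_t=P_{X_{\geq t+1}^\perp}(w^{(t-1)}_t-q_t)X_t .
\]
Summing from $t=1$ to $N$ with ${\bf e}_0=Xw-Xw^{(0)}=0$ gives the expression for ${\bf e}_N$. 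For $t=N$ the span $X_{\geq N+1}$ is empty and the projection is the identity. Since $w^{(N)}=q$, we have ${\bf e}_N=Xw-Xq$.

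For the norm identity, we show the summands $u_j:=P_{X_{\geq j+1}^\perp}(w^{(j-1)}_j-q_j)X_j$ are mutually orthogonal. Fix $i<j$. The vector $u_j$ lies in $\text{span}(X_j,X_{\geq j+1})=\text{span}(X_{\geq j})$. Because $i<j$, this span is contained in $\text{span}(X_{\geq i+1})$. Meanwhile $u_i$ is orthogonal to $\text{span}(X_{\geq i+1})$, so $\langle u_i,u_j\rangle=0$. Pythagoras then gives
\[
\|Xw-Xq\|^2=\sum_j|w^{(j-1)}_j-q_j|^2\,\|P_{X_{\geq j+1}^\perp}X_j\|^2 .
\]

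The main obstacle is the first step: justifying the projection form of the update, which relies on the equivalence between line 6 of \cref{GPTQ} and the least-squares problem. We invoke this equivalence from \citet{zhang2025qronos} rather than re-derive it through the Cholesky factor. The infinite alphabet plays no role in the identities themselves. It only ensures $|w^{(j-1)}_j-q_j|\le\delta/2$ in later bounds.
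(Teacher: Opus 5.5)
Your proof is correct. The paper does not reprove this lemma; it imports it from \citet{zhang2025provable}, on top of the least-squares characterization of the GPTQ update in \cref{lemma:lst GPTQ}. Your argument is the natural proof along those lines: the normal equations give the projection form of the tail update, you telescope from ${\bf e}_0=0$, and for $i<j$ the summands are orthogonal because $u_j\in\operatorname{span}(X_{\geq j})\subseteq\operatorname{span}(X_{\geq i+1})$ while $u_i$ is orthogonal to that span.

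One sharpening: you never need uniqueness of the minimizer. Only the fitted vector $\sum_{j\geq t+1}(w^{(t)}_j-w^{(t-1)}_j)X_j$ enters, and it is the same for every least-squares solution. So your argument also proves the rank-deficient extension asserted in \cref{lemma:lst GPTQ}, which is what \cref{thm:no_lam_cha} needs for the singular augmented matrix $\mathbb{X}$. Because your recursion goes step by step, it also covers the truncated $N$-step run on $\mathbb{X}_+$ used in the proof of \cref{thm:main_channel}.
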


\begin{theorem}\label{thm:main_channel}
    Assume an infinite uniform alphabet $\mathcal{A}=\mathcal{A}^{\delta}= \{ \pm k\delta : k \in \mathbb{Z} \}$ {and $\lambda>0$}. If \cref{lora_in_GPTQ} is applied with $L=V_r$, then, for each column $\mathbf{w}$ of $W$, the corresponding outputs $\mathbf{q}$ and $\mathbf{r}$ satisfy
    \begin{equation}\label{eq:deterministic qlora channel bound}
        \|X\mathbf{w}-X(\mathbf{q}+L\mathbf{r})\|^2
        +\lambda\|\mathbf{w}-\mathbf{q}\|^2
        +\lambda\|\mathbf{r}\|^2
        \leq
        \frac{\delta^2}{4}
        \left(\|X-X_r\|_F^2+(N+r)\lambda\right).
    \end{equation}
\end{theorem}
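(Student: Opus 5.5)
The plan is to rewrite the left-hand side of \eqref{eq:deterministic qlora channel bound} as the unregularized GPTQ reconstruction error for the appended augmented matrix $\mathbb{X}_+$, apply \cref{lemma:err evolution}, and then bound each projection term by an explicit least-squares competitor built from $L=V_r$.

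\textbf{Step 1 (reformulation).} Let $\mathbbm{w}=(\mathbf{w};0)$ be the augmented column and $(\mathbf{q};\mathbf{r})$ the output of \cref{lora_in_GPTQ}. By the block structure of $\mathbb{X}_+$,
\[
\mathbb{X}_+\mathbbm{w}-\mathbb{X}_+\begin{pmatrix}\mathbf{q}\\ \mathbf{r}\end{pmatrix}
=\begin{pmatrix} X(\mathbf{w}-\mathbf{q})-\widehat X\mathbf{r}\\ \sqrt{\lambda}(\mathbf{w}-\mathbf{q})\\ -\sqrt{\lambda}\,\mathbf{r}\end{pmatrix}.
\]
Since $\widehat X=XL$, its squared norm is exactly the left-hand side of \eqref{eq:deterministic qlora channel bound}. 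Moreover, as noted in \cref{sec:gptq intro}, the algorithm with $\mathbb{H}+\lambda I$ is unregularized GPTQ on $\mathbb{X}_+$, which has full column rank.

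\textbf{Step 2 (error evolution, truncated run).} The algorithm performs only $N$ of the $N+r$ steps; the last $r$ coordinates are never rounded. The recursion in \cref{lemma:err evolution} is a per-step identity: the error-diffusion step is the least-squares update onto the columns $(\mathbb{X}_+)_{\ge t+1}$. Hence it remains valid for $t\le N$, giving
\[
{\bf e}_N=\sum_{j=1}^N (w_j^{(j-1)}-q_j)\,P_{(\mathbb{X}_+)_{\ge j+1}^\perp}(\mathbb{X}_+)_j .
\]
The summands are mutually orthogonal. Indeed, the $j$-th term lies in $\mathrm{span}((\mathbb{X}_+)_{\ge j})$ and is orthogonal to $\mathrm{span}((\mathbb{X}_+)_{\ge j+1})$, which contains all later terms. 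This is the same argument as in the lemma, so
\[
\|{\bf e}_N\|^2=\sum_{j=1}^N |w_j^{(j-1)}-q_j|^2\,\bigl\|P_{(\mathbb{X}_+)_{\ge j+1}^\perp}(\mathbb{X}_+)_j\bigr\|^2 .
\]
With the infinite alphabet, rounding to nearest gives $|w_j^{(j-1)}-q_j|\le\delta/2$. I expect the main care to be needed here, in checking that stopping after $N$ steps leaves $\mathbf{r}=\mathbbm{w}^{(N)}_{\ge N+1}$ so that ${\bf e}_N$ is exactly the residual from Step 1.

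\textbf{Step 3 (bounding projections via $L=V_r$).} This is the key step. For every $j\le N$, the trailing block $(\mathbb{X}_+)_{\ge j+1}$ contains all columns of $\widehat X_+$. Therefore, for any $c\in\R^r$,
\[
\|P_{(\mathbb{X}_+)_{\ge j+1}^\perp}(X_+)_j\|\le\|(X_+)_j-\widehat X_+c\|.
\]
Choose $c=V_r^\top e_j$. The top block is then $Xe_j-XV_rV_r^\top e_j=(X-X_r)e_j$, using $XV_rV_r^\top=X_r$. The lower blocks are $\sqrt{\lambda}\,e_j$ and $-\sqrt{\lambda}\,V_r^\top e_j$, placed in disjoint coordinates. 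Hence
\[
\|P_{(\mathbb{X}_+)_{\ge j+1}^\perp}(X_+)_j\|^2\le\|(X-X_r)e_j\|^2+\lambda\bigl(1+\|V_r^\top e_j\|^2\bigr).
\]
Summing over $j=1,\dots,N$ and using $\sum_j\|V_r^\top e_j\|^2=\|V_r\|_F^2=r$ gives $\|X-X_r\|_F^2+(N+r)\lambda$. Multiplying by $\delta^2/4$ yields \eqref{eq:deterministic qlora channel bound}.
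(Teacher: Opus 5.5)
Your proof is correct and follows the paper's argument: apply the error evolution identity to $\mathbb{X}_+$ for only the first $N$ steps, then use that the columns of $\widehat X_+$ lie in every trailing block $(\mathbb{X}_+)_{\ge j+1}$. The only difference is the last step. The paper computes $\|P_{\widehat X_+}X_+\|_F^2=\sum_{i\le r}\sigma_i^4/(\sigma_i^2+\lambda)$ exactly from the projection formula and then uses $\sigma_i^4/(\sigma_i^2+\lambda)\ge\sigma_i^2-\lambda$, whereas your non-optimal competitor $c=V_r^\top e_j$ skips that computation and gives the same $\|X-X_r\|_F^2+(N+r)\lambda$ directly.
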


We defer the full proof to \cref{sec:proofs}. To illustrate the main idea, we first prove the limiting, unregularized case, corresponding to $\lambda=0$, where the argument is cleaner and the role of the low-rank correction is most transparent.

To do this, we use an equivalent least squares formulation of GPTQ. 
Indeed, although the regularized augmented Hessian $\mathbb{H}+\lambda I$ is invertible for $\lambda>0$, the unregularized augmented Hessian $\mathbb{H}$ is generally singular. Therefore, one cannot implement the $\lambda=0$ version of \cref{lora_in_GPTQ} through the usual inverse Hessian or Cholesky based GPTQ updates. 
 The least squares formulation avoids this issue and allows the same error evolution argument to be applied directly to the augmented data matrix $\mathbb{X}$. We recall the relevant least squares formulation below.

\begin{lemma}[\citet{zhang2025qronos}]\label{lemma:lst GPTQ}
    Let $X \in \mathbb{R}^{m \times N}$ have full column rank, with $m \geq N$, and let $w \in \mathbb{R}^N$. Then the GPTQ iterations with $H = X^\top X$ are equivalent to the following least squares updates, with $w^{(0)}_{\geq 1}=w$:
    \begin{align}
        q_t
        &=
        \underset{p \in \mathcal{A}}{\arg\min}
        \frac{1}{2}
        \left\|
        Xw
        -
        \sum_{j=1}^{t-1} q_j X_j
        -
        pX_t
        -
        \sum_{j=t+1}^{N} w^{(t-1)}_j X_j
        \right\|^2,
        \label{eq:q update}
        \\
        w^{(t)}_{\geq t+1}
        &=
        \underset{(v_{t+1}, \dots, v_N) \in \mathbb{R}^{N-t}}{\arg\min}
        \frac{1}{2}
        \left\|
        Xw
        -
        \sum_{j=1}^{t} q_j X_j
        -
        \sum_{j=t+1}^{N} v_j X_j
        \right\|^2.
        \label{eq:w update}
    \end{align}

    Moreover, these least squares updates are well defined even when $X$ is rank deficient or $m<N$, although the minimizer in the second update may not be unique. In that case, one may still implement unregularized GPTQ through \eqref{eq:w update}, and the error evolution identity in \cref{lemma:err evolution} continues to hold for this formulation.
\end{lemma}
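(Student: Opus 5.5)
The plan is to prove the equivalence by induction on $t$, maintaining one invariant. Write $\mathbf{e}_{t}=Xw-Xw^{(t)}$ as in \eqref{eq:err_def}. The invariant is that after step $t$ (and trivially at $t=0$, where $\mathbf{e}_0=0$) the residual $\mathbf{e}_{t}$ is orthogonal to $\operatorname{col}(X_{\geq t+1})$. For the least squares formulation this is just the normal equations of \eqref{eq:w update}, since $w^{(t)}_{\geq t+1}$ minimizes the residual over all coefficients on $X_{\geq t+1}$.

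\textbf{Step 1 (the $q$ update).} Assume the invariant at step $t-1$. The objective in \eqref{eq:q update} is
\[
\left\|\mathbf{e}_{t-1}+(w^{(t-1)}_t-p)X_t\right\|^2 .
\]
Because $X_t\in\operatorname{col}(X_{\geq t})$, the invariant removes the cross term, so the objective equals
\[
\|\mathbf{e}_{t-1}\|^2+(w^{(t-1)}_t-p)^2\|X_t\|^2 .
\]
Full column rank gives $X_t\neq 0$, so the minimizer over $p\in\mathcal{A}$ is $\mathcal{Q}(w^{(t-1)}_t)$. This is exactly line 5 of \cref{GPTQ}.

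\textbf{Step 2 (the $w$ update).} Using the invariant in the same way, \eqref{eq:w update} becomes
\[
\min_{v}\left\|(w^{(t-1)}_t-q_t)X_t-X_{\geq t+1}\bigl(v-w^{(t-1)}_{\geq t+1}\bigr)\right\|^2 .
\]
Let $G=X_{\geq t+1}^\top X_{\geq t+1}=H_{\geq t+1,\geq t+1}$ and $h=X_{\geq t+1}^\top X_t=H_{\geq t+1,t}$. The minimizer is
\[
v=w^{(t-1)}_{\geq t+1}+(w^{(t-1)}_t-q_t)\,G^{-1}h .
\]
To match line 6 of \cref{GPTQ}, I need the identity
\[
\frac{\Psi_{\geq t+1,t}}{\Psi_{tt}}=-\,G^{-1}h .
\]
I would derive it as follows. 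From $H^{-1}=\Psi\Psi^\top$ we get $H\Psi=\Psi^{-\top}$, and $\Psi^{-\top}$ is upper triangular because $\Psi$ is lower triangular. So the entries of $H\Psi_{:,t}$ with index $i>t$ vanish. Since $\Psi_{i,t}=0$ for $i<t$, this reads
\[
h\,\Psi_{tt}+G\,\Psi_{\geq t+1,t}=0 ,
\]
which gives the identity (note $\Psi_{tt}\neq0$). Substituting it into line 6 yields the same $v$, and the invariant propagates to step $t$.

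\textbf{Step 3 (the rank-deficient case).} When $X$ is rank deficient or $m<N$, \eqref{eq:q update} is still a one-dimensional minimization over a discrete set, and \eqref{eq:w update} always has a minimizer. Different minimizers differ only by vectors in $\ker X_{\geq t+1}$, so the fitted vector $X_{\geq t+1}v$, and hence the residual $\mathbf{e}_t$, is unique. The normal equations still give the orthogonality invariant. The optimal fit is then
\[
X_{\geq t+1}\bigl(v-w^{(t-1)}_{\geq t+1}\bigr)=(w^{(t-1)}_t-q_t)\,P_{X_{\geq t+1}}X_t ,
\]
so
\[
\mathbf{e}_t=\mathbf{e}_{t-1}+(w^{(t-1)}_t-q_t)\,P_{X^\perp_{\geq t+1}}X_t .
\]
Telescoping from $\mathbf{e}_0=0$ gives \eqref{final error}. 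For the norm identity of \cref{lemma:err evolution}, I would observe that the $j$-th increment lies in $\operatorname{col}(X_{\geq j})\cap\operatorname{col}(X_{\geq j+1})^\perp$. These subspaces are mutually orthogonal for different $j$, so the squared norms add.

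The step I expect to be the main obstacle is the identification of the Cholesky ratio $\Psi_{\geq t+1,t}/\Psi_{tt}$ with $-G^{-1}h$. The rest is bookkeeping with the normal equations. My approach there is the triangularity of $H\Psi$, and I would double-check the sign convention against line 6 of \cref{GPTQ}.
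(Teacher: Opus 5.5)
Your proof is correct. Note that the paper gives no proof of this lemma; it imports it from \citet{zhang2025qronos}. The only in-text justification is the narrative of \cref{sec:gptq intro}, where GPTQ is OBQ run in a fixed order and the Cholesky factor of $H^{-1}$ is said to give a computationally equivalent output.

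Following that route, one would first note that the OBQ inner problem $\min\frac12\delta^\top H\delta=\min\frac12\|X\delta\|^2$, with $\delta_t=q_t-w^{(t-1)}_t$ and past coordinates frozen, is literally the error-diffusion least squares problem. One would then combine the OBS closed form $-\frac{w_t-q_t}{[H_F^{-1}]_{tt}}(H_F^{-1})_{:,t}$, a Schur-complement computation, and the observation of \citet{frantar2022gptq} that the columns of $\Psi$ encode these successive updates. You bypass those last three ingredients. The single fact that $H\Psi=\Psi^{-\top}$ is upper triangular gives $\Psi_{\geq t+1,t}/\Psi_{tt}=-G^{-1}h$ directly, and the sign matches line 6 of \cref{GPTQ}.

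Your invariant $\mathbf{e}_t\perp\operatorname{col}(X_{\geq t+1})$ also makes explicit something the paper leaves implicit. It explains why the full-residual form \eqref{eq:w update} agrees with the error-diffusion form quoted in \cref{sec:gptq intro}, which omits $\mathbf{e}_{t-1}$. Your Step 3 goes further and actually proves the error evolution identity of \cref{lemma:err evolution} for rank-deficient $X$, which the paper only asserts. The telescoping and orthogonality argument applies verbatim when one stops after the first $N$ of $N+r$ steps, as in \cref{thm:no_lam_cha}.

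One small loose end: if some $X_t=0$ in the rank-deficient case, the argmin in \eqref{eq:q update} is all of $\mathcal{A}$. You then need a tie-breaking convention such as $q_t=\mathcal{Q}(w^{(t-1)}_t)$. This is harmless, because the corresponding increment $P_{X_{\geq t+1}^\perp}X_t$ is zero.
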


We now apply this least squares formulation to the augmented matrix $\mathbb{X}=[X\ \widehat X]$. This gives the $\lambda=0$ analogue of \cref{thm:main_channel}.

\begin{theorem}\label{thm:no_lam_cha}
    Assume an infinite alphabet $\mathcal{A}=\mathcal{A}^{\delta}= \{ \pm k\delta : k \in \mathbb{Z} \}$. Consider the unregularized version of \cref{lora_in_GPTQ} with $\lambda=0$, {implemented through the least squares GPTQ formulation}. Then, for each column $\mathbf{w}$ of $W$, the corresponding outputs $\mathbf{q}$ and $\mathbf{r}$ satisfy
    \[
        \|X\mathbf{w} - X(\mathbf{q}+L\mathbf{r})\|_2^2
        \leq
        \frac{\delta^2}{4} \|X - X_r \|_F^2 .
    \]
\end{theorem}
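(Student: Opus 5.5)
We apply the least squares formulation of \cref{lemma:lst GPTQ} to the augmented calibration matrix $\mathbb{X}=[X\ \widehat X]$, with $\widehat X=XV_r$ and augmented weight vector $(\mathbf{w};0)\in\R^{N+r}$. We run $N$ quantization steps and treat the last $r$ coordinates as free full-precision coordinates that are updated at every step and never rounded. The final error is then $\mathbf{e}_N=X\mathbf{w}-X\mathbf{q}-\widehat X\mathbf{r}=X\mathbf{w}-X(\mathbf{q}+L\mathbf{r})$. By the error evolution identity of \cref{lemma:err evolution}, which \cref{lemma:lst GPTQ} guarantees continues to hold here, we get
\[
\mathbf{e}_N=\sum_{j=1}^{N}(w^{(j-1)}_j-q_j)\,P_{\mathbb{X}_{\geq j+1}^{\perp}}X_j .
\]
Here $\mathbb{X}_{\geq j+1}$ consists of the columns $X_{j+1},\dots,X_N$ together with all $r$ columns of $\widehat X$. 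The $r$ trailing columns are never quantized, so they belong to the "remaining" span at every step. The identity is only a telescoping of one-step increments, each of which is the projection of the rounding error onto the complement of the span of the still-free columns. Because the projected vectors $P_{\mathbb{X}_{\geq j+1}^\perp}X_j$ are mutually orthogonal (as in the lemma), we obtain
\[
\|\mathbf{e}_N\|^2=\sum_{j=1}^N|w^{(j-1)}_j-q_j|^2\,\|P_{\mathbb{X}_{\geq j+1}^{\perp}}X_j\|^2\le\frac{\delta^2}{4}\sum_{j=1}^N\|P_{\mathbb{X}_{\geq j+1}^{\perp}}X_j\|^2 .
\]
The inequality uses the infinite-alphabet rounding bound $|w^{(j-1)}_j-q_j|\le\delta/2$. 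Orthogonality holds because $P_{\mathbb{X}_{\ge j+1}^\perp}X_j$ lies in $\mathrm{span}(\mathbb{X}_{\ge j})$ and is orthogonal to $\mathrm{span}(\mathbb{X}_{\ge j+1})$, so these vectors form a Gram–Schmidt-type family.

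The key step bounds each projection term. Since $\mathrm{col}(\mathbb{X}_{\geq j+1})\supseteq\mathrm{col}(\widehat X)=\mathrm{col}(XV_r)$, and projecting onto a smaller complement only shrinks norms,
\[
\|P_{\mathbb{X}_{\geq j+1}^{\perp}}X_j\|\le\|P_{(XV_r)^\perp}X_j\| .
\]
Writing $X=U\Sigma V^\top$, we have $XV_r=U_r\Sigma_r$. If $\sigma_r>0$, then $\mathrm{col}(XV_r)=\mathrm{col}(U_r)$, so $P_{(XV_r)^\perp}X=(I-U_rU_r^\top)X=X-X_r$. Summing over columns gives
\[
\sum_j\|P_{(XV_r)^\perp}X_j\|^2=\|X-X_r\|_F^2 ,
\]
which yields the claim.

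The main obstacle is rigor in the rank-deficient case, where $\mathbb{X}$ never has full column rank, and in the degenerate case where $\sigma_r=0$. For the first, the argument must rely on the least squares updates and the projection identity directly, not on Cholesky factors. Minimizers may be non-unique, but the residual $\mathbf{e}_t$ is unique, and it is the residual that the identity concerns. I would therefore re-derive the one-step identity as follows: after rounding, the new optimal residual equals the old residual plus the rounding error projected onto the complement of $\mathrm{span}(\mathbb{X}_{\ge t+1})$, using that the old residual is already orthogonal to $\mathrm{span}(\mathbb{X}_{\ge t})$. For the second, when $\sigma_r=0$ we have $\mathrm{col}(XV_r)\subseteq\mathrm{col}(U_r)$ and $X_r$ only involves the nonzero singular values. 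In that case $\|(I-P_{XV_r})X\|_F^2$ still equals $\sum_{i>r}\sigma_i^2=\|X-X_r\|_F^2$, since every component of $X$ along a $U_i$ with $\sigma_i=0$ is zero anyway.
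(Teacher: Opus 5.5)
Your proposal is correct and follows essentially the same route as the paper. You apply the error-evolution identity to the augmented matrix $\mathbb{X}=[X\ \widehat X]$ with the $\delta/2$ rounding bound, use projection monotonicity because the columns of $\widehat X$ always lie among the future columns, and use $\mathrm{col}(XV_r)=\mathrm{col}(U_r)$ to reach $\|X-X_r\|_F^2$. Your re-derivation of the one-step identity for the rank-deficient $\mathbb{X}$ and your treatment of $\sigma_r=0$ supply details that the paper delegates to \cref{lemma:lst GPTQ} or leaves implicit.
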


\begin{proof}
    By \cref{lemma:err evolution}, applied to the augmented matrix $\mathbb{X}=[X\ \widehat X]$, we have
    \[
    \begin{aligned}
        \|X\mathbf{w} - X(\mathbf{q}+L\mathbf{r})\|_2^2
        &=
        \left\|
        \mathbb{X}
        \begin{bmatrix}
            \mathbf{w}\\
            \mathbf{0}
        \end{bmatrix}
        -
        \mathbb{X}
        \begin{bmatrix}
            \mathbf{q}\\
            \mathbf{r}
        \end{bmatrix}
        \right\|_2^2 \\
        &=
        \sum_{j=1}^N |w^{(j-1)}_j - q_j|^2
        \|P_{\mathbb{X}_{\geq j+1}^{\perp}}X_j\|_2^2 \\
        &\leq
        \frac{\delta^2}{4}
        \sum_{j=1}^N
        \|P_{\mathbb{X}_{\geq j+1}^{\perp}}X_j\|_2^2 .
    \end{aligned}
    \]
    For each $j\leq N$, the columns of $\widehat X$ are contained in $\mathbb{X}_{\geq j+1}$. Hence projection onto $\operatorname{col}(\mathbb{X}_{\geq j+1})^\perp$ can only decrease the norm compared with projection onto $\operatorname{col}(\widehat X)^\perp$. Therefore
    \[
        \sum_{j=1}^N
        \|P_{\mathbb{X}_{\geq j+1}^{\perp}}X_j\|_2^2
        \leq
        \sum_{j=1}^N
        \|P_{\widehat X^\perp}X_j\|_2^2 .
    \]
    Since $L=V_r$, we have $\widehat X=XL=XV_r=U_r\Sigma_r$, and therefore $\operatorname{col}(\widehat X)=\operatorname{col}(U_r)$. It follows that
    \[
    \begin{aligned}
         \|X\mathbf{w} - X(\mathbf{q}+L\mathbf{r})\|_2^2 
         &\leq \frac{\delta^2}{4} \sum_{j=1}^N
        \|P_{\widehat X^\perp}X_j\|_2^2
        = \frac{\delta^2}{4}
        \sum_{j=1}^N
        \|P_{U_r^\perp}X_j\|_2^2 \\
        &= \frac{\delta^2}{4}
        \|(I-U_rU_r^\top)X\|_F^2 
        =\frac{\delta^2}{4}
        \|(I-U_rU_r^\top)U\Sigma\|_F^2 \\
        &=\frac{\delta^2}{4}
        \|X-X_r\|_F^2 .
    \end{aligned}
    \]    
\end{proof}

The preceding channel-wise error bounds immediately imply layer-wise bounds by summing over the columns of $W$. We present both the unregularized least squares version and the regularized version, for completeness.

\begin{corollary}\label{cor:no_lam_layer}
    Assume an infinite alphabet $\mathcal{A}=\mathcal{A}^{\delta}= \{ \pm k\delta : k \in \mathbb{Z} \}$. If the unregularized version of \cref{lora_in_GPTQ} is implemented through the least squares formulation in \eqref{eq:q update} and \eqref{eq:w update}, then
    \[
        \|XW-X(Q+LR)\|_F^2
        \leq
        \frac{\delta^2 N'}{4}\|X-X_r\|_F^2 .
    \]
\end{corollary}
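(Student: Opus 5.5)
The bound follows by summing the channel-wise estimate of \cref{thm:no_lam_cha} over the columns of $W$. I would first observe that the unregularized \cref{lora_in_GPTQ}, implemented through the least squares updates \eqref{eq:q update} and \eqref{eq:w update}, processes each column $\mathbf{w}_k$ of $W$ independently. Here $k=1,\dots,N'$, and each column is handled with the same augmented calibration matrix $\mathbb{X}=[X\ \widehat X]$ and the same $L=V_r$. It produces the $k$-th column $\mathbf{q}_k$ of $Q$ and the $k$-th column $\mathbf{r}_k$ of $R$. Consequently the $k$-th column of $XW-X(Q+LR)$ is exactly $X\mathbf{w}_k-X(\mathbf{q}_k+L\mathbf{r}_k)$.

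Next I would use the column decomposition of the Frobenius norm,
\[
\|XW-X(Q+LR)\|_F^2=\sum_{k=1}^{N'}\|X\mathbf{w}_k-X(\mathbf{q}_k+L\mathbf{r}_k)\|_2^2 .
\]
I would then apply \cref{thm:no_lam_cha} to each summand, which bounds it by $\frac{\delta^2}{4}\|X-X_r\|_F^2$. This bound does not depend on $k$: it uses only the rounding error $|w^{(j-1)}_j-q_j|\le\delta/2$ and the projections determined by $X$ and $V_r$. Summing the $N'$ identical bounds gives $\frac{\delta^2N'}{4}\|X-X_r\|_F^2$.

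The only point needing care is that the per-column outputs do assemble into the matrices $Q$ and $R$. One also needs the infinite alphabet assumption to hold per column even if the per-column grids differ. This is harmless provided every column uses the same spacing $\delta$, as the statement assumes, so I do not expect a genuine obstacle.
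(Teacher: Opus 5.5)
Your proposal is correct and matches the paper's proof: apply \cref{thm:no_lam_cha} to each of the $N'$ columns and sum, using the column-wise decomposition of the Frobenius norm. The extra checks you mention (that the per-column outputs assemble into $Q$ and $R$, and that every column uses the common spacing $\delta$) are implicit in the paper's one-line argument.
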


\begin{proof}
    Apply \cref{thm:no_lam_cha} to each column of $W$ and sum the resulting inequalities over all $N'$ columns.
\end{proof}

\begin{corollary}\label{thm:main_layer}
    Assume an infinite alphabet $\mathcal{A}=\mathcal{A}^{\delta}= \{ \pm k\delta : k \in \mathbb{Z} \}$. If \cref{lora_in_GPTQ} is run with regularization parameter $\lambda>0$, then
    \begin{equation}\label{eq:deterministic qlora layer bound}
        \|XW-X(Q+LR)\|_F^2
        +\lambda\|W-Q\|_F^2
        +\lambda\|LR\|_F^2
        \leq
        \frac{\delta^2 N'}{4}
        \left(\|X-X_r\|_F^2+(N+r)\lambda\right).
    \end{equation}
\end{corollary}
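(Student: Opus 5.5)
The plan is to deduce \cref{thm:main_layer} from the channel-wise bound \cref{thm:main_channel} by summing over the $N'$ columns of $W$, exactly as \cref{cor:no_lam_layer} follows from \cref{thm:no_lam_cha}. Write $W=[\mathbf{w}_1,\dots,\mathbf{w}_{N'}]$, $Q=[\mathbf{q}_1,\dots,\mathbf{q}_{N'}]$ and $R=[\mathbf{r}_1,\dots,\mathbf{r}_{N'}]$. Since \cref{lora_in_GPTQ} processes the columns of $\mathbb{W}$ independently, with the same factor $\mathbb{L}$ and the same $L=V_r$, the pair $(\mathbf{q}_k,\mathbf{r}_k)$ is precisely the output of the single-column procedure analyzed in \cref{thm:main_channel} applied to $\mathbf{w}_k$. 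That theorem therefore gives, for each $k$,
\[
\|X\mathbf{w}_k-X(\mathbf{q}_k+L\mathbf{r}_k)\|^2+\lambda\|\mathbf{w}_k-\mathbf{q}_k\|^2+\lambda\|\mathbf{r}_k\|^2\le \frac{\delta^2}{4}\bigl(\|X-X_r\|_F^2+(N+r)\lambda\bigr).
\]

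Next, sum over $k=1,\dots,N'$. The right-hand side becomes $\frac{\delta^2N'}{4}(\|X-X_r\|_F^2+(N+r)\lambda)$. On the left, the squared Frobenius norm is the sum of the squared column norms, so the first two sums equal $\|XW-X(Q+LR)\|_F^2$ and $\lambda\|W-Q\|_F^2$. The third sum is $\lambda\|R\|_F^2$.

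The only remaining step, and the one point that needs care, is to replace $\|R\|_F$ by $\|LR\|_F$. Because $L=V_r$ has orthonormal columns, $L^\top L=I_r$, and hence
\[
\|LR\|_F^2=\operatorname{tr}(R^\top L^\top LR)=\|R\|_F^2.
\]
The summed inequality is therefore exactly \eqref{eq:deterministic qlora layer bound}. No other obstacle arises; all the substantive work lies in \cref{thm:main_channel}, which is assumed.
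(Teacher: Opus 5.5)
Your proposal is correct and follows the paper's proof exactly. It applies \cref{thm:main_channel} to each column, sums over the $N'$ columns, and uses $L^\top L=I_r$ for $L=V_r$ to replace $\|R\|_F^2$ by $\|LR\|_F^2$.
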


\begin{proof}
    Apply \cref{thm:main_channel} to each column $\mathbf{w}_i$ of $W$, with corresponding outputs $\mathbf{q}_i$ and $\mathbf{r}_i$. Summing over $i=1,\ldots,N'$ gives
    \[
        \|XW-X(Q+LR)\|_F^2
        +\lambda\|W-Q\|_F^2
        +\lambda\|R\|_F^2
        \leq
        \frac{\delta^2 N'}{4}
        \left(\|X-X_r\|_F^2+(N+r)\lambda\right).
    \]
    Since $L=V_r$ has orthonormal columns, $\|LR\|_F^2=\|R\|_F^2$. This gives the stated bound.
\end{proof}

\begin{remark}[Comparison with plain GPTQ]
    It is useful to compare \cref{eq:deterministic qlora layer bound} with the corresponding bound for plain GPTQ. In the notation of this section, \citet[Corollary 3.5]{zhang2025provable} gives
    \begin{equation}\label{eq:deterministic gptq bound}
        \|XW-XQ\|_F^2+\lambda \|W-Q\|_F^2
        \leq
        \frac{\delta^2 N'}{4}\left(\|X\|_F^2+N\lambda\right).
    \end{equation}
    The main improvement in \cref{eq:deterministic qlora layer bound} is that the term $\|X\|_F^2$ is replaced by the residual term $\|X-X_r\|_F^2$. Thus, when the calibration matrix $X$ admits a good rank $r$ approximation, the layer-wise reconstruction bound can be substantially smaller than the corresponding GPTQ bound. The price for this improvement is the additional regularization term $r\lambda$, which is comparatively small when $r\ll N$.
\end{remark}

We close this section on error bounds by demonstrating how the infinite alphabet bound in \cref{thm:main_layer} should be interpreted for finite precision quantization.

\begin{remark}[Finite alphabets and dynamic range]\label{finite bit remark}
    Choose $\lambda$ to be a sufficiently small constant multiple of $\|X-X_r\|_F^2/(N+r)$. Then the right hand side of \cref{eq:deterministic qlora layer bound} satisfies
    \[
        \frac{\delta^2 N'}{4}
        \left(\|X-X_r\|_F^2+(N+r)\lambda\right)
        \lesssim
        N'\delta^2\|X-X_r\|_F^2.
    \]
    Since every term on the left hand side of \cref{eq:deterministic qlora layer bound} is nonnegative, this gives
    \begin{equation}\label{eq:qlora error part}
        \|XW-X(Q+LR)\|_F^2
        \lesssim
        N'\delta^2\|X-X_r\|_F^2.
    \end{equation}
    Moreover, after dividing the regularization terms by $\lambda$, we obtain
    \begin{equation}\label{eq:qlora reg part}
        \|W-Q\|_F^2 \lesssim NN'\delta^2,
        \qquad
        \|LR\|_F^2 \lesssim NN'\delta^2,
    \end{equation}
    where we use that $r\leq N$ in the intended regime. 

    Thus \cref{thm:main_channel} and \cref{thm:main_layer} show that, on average, the entries of $Q$ deviate from the corresponding entries of $W$ by at most a constant multiple of $\delta$. If, in addition, one has the entrywise control $\|W-Q\|_\infty\lesssim \delta$, as one may expect generically, then a finite alphabet of the form
    \[
        \mathcal{A}_b^\delta
        =
        \{k\delta : k=-2^{b-1},\ldots,2^{b-1}\}
    \]
    suffices, provided $2^{b-1}\delta\geq \|W\|_\infty+C\delta$ for a suitable absolute constant $C$. The additive $C\delta$ term accounts for the extra dynamic range needed to absorb the adaptive GPTQ updates.

    This heuristic can be made rigorous by replacing deterministic rounding with stochastic rounding in \cref{lora_in_GPTQ}. Define the unbiased stochastic scalar quantizer $\mathcal{Q}_{\mathrm{stoc}}:\mathbb{R}\to \mathcal{A}^\delta$ as follows. If $z\in[k\delta,(k+1)\delta]$, then
    \[
        \mathcal{Q}_{\mathrm{stoc}}(z)
        :=
        \begin{cases}
            k\delta, & \text{with probability } p,\\
            (k+1)\delta, & \text{with probability } 1-p,
        \end{cases}
        \qquad
        p=1-\frac{z}{\delta}+k.
    \]
    Equivalently, $k=\lfloor z/\delta\rfloor$, and $\mathbb{E}\mathcal{Q}_{\mathrm{stoc}}(z)=z$.
    With a suitable choice of  $\lambda$, the arguments used in, for example, \citet[Theorem 4.6]{zhang2025provable} or \citet[Lemma 3.1]{zhang2023spfq}, imply that, with high probability,
    \[
        \|Q\|_\infty
        \leq
        \|W\|_\infty
        +
        K\delta\sqrt{\log(NN')},
    \]
    for a dimension independent constant $K$. We do not repeat the proof here. The corresponding version of \cref{thm:main_layer} has the same form, except that the factor $\delta^2/4$ on the right hand side is replaced by $\delta^2$.

    Now choose $\delta=c\|W\|_\infty/2^{b-1}$ for some constant $c>1$. Then
    \[
        \|Q\|_\infty
        \leq
        \delta
        \left(
        K\sqrt{\log(NN')}+\frac{2^{b-1}}{c}
        \right).
    \]
    Hence $\|Q\|_\infty\leq 2^{b-1}\delta$ whenever
    \[
        K\sqrt{\log(NN')}
        \leq
        \left(1-\frac{1}{c}\right)2^{b-1}.
    \]
    Thus, for fixed matrix dimensions and sufficiently large bit width $b$, the stochastic rounding implementation remains within the finite alphabet with high probability, with grid spacing on the natural scale $\delta\lesssim \|W\|_\infty/2^{b-1}$.
\end{remark}

\subsection{Near Optimality of GPTQ-intrinsic LoRA}\label{sec:bound is tight}

In this section, we compare the information theoretic lower bound in \cref{thm:data lower bound} with the upper bound achieved by GPTQ-intrinsic LoRA in \cref{thm:main_layer}. \SZnew{While \cref{thm:main_layer} is stated for an infinite alphabet, the discussions in \cref{finite bit remark} show that the bounds in \eqref{eq:qlora error part} and \eqref{eq:qlora reg part} continue to hold (with high probability) under the finite alphabet
\[
    \mathcal{A}_b^{\delta}
    =
    \{k\delta : k=-2^{b-1},\ldots,2^{b-1}\}
\]
with $\delta\lesssim \|W\|_\infty/2^{b-1}$ when using the stochastic rounding operator. In the lower bound, we use a slightly different parameterization of the quantization alphabet $\{-B,\ldots,-1,0,1,\ldots,B\}$, up to an adaptive scaling factor. 
To compare the two bounds, we set $B=2^{b-1}$ throughout this section. Thus both bounds share the same symmetric integer range $\{-2^{b-1},\ldots,2^{b-1}\}$ in representing the low-precision component.}

We assume that the calibration matrix has an approximate rank-$r$ structure. This is consistent with empirical observations that activations, and in some settings weights, of pretrained models often exhibit approximate low-rank structure \cite{huh2021low, yu2023compressing, zhang2025theoretical, zhang2024magr}. More precisely, let $X\in\R^{m\times N}$, with $m>N$, have singular values $\sigma_1\geq\cdots\geq\sigma_N>0$. We assume that the tail after the first $r$ singular values is relatively flat, in the sense that $\sigma_{r+1}\leq C\sigma_N$ for some absolute constant $C$, and that this tail is well separated from the leading singular values, so that $\sigma_{r+1}\ll \sigma_r$.

Let $W\in\R^{N\times N'}$ denote the weight matrix being approximated by a low-precision plus low-rank decomposition. Following the discussion in \cref{why flat}, we restrict attention to non-spiky weight matrices, and assume
\[
    W\in
    \mathbb{B}_{\mathrm{flat}}^1
    :=
    \left\{
    A\in\R^{N\times N'} :
    \|A\|_F\leq 1,\ 
    \|A\|_\infty\leq \frac{\sqrt{3}}{\sqrt{NN'}}
    \right\}.
\]

\subsubsection{Low-rank Corrections at the Quantization Error Scale}

We first consider the regime in which the low-rank correction has Frobenius norm on the same scale as the naive quantization error one would obtain if they were only using a round-to-nearest quantization algorithm. Thus we consider $\rho \asymp 1/B$ as this is the natural scale suggested by the flatness assumption on $W$. Indeed, since $\|W\|_\infty\leq \sqrt{3}/\sqrt{NN'}$, a natural grid spacing is $\delta \asymp 1/(B\sqrt{NN'})$. A naive round-to-nearest quantization of $W$ then has entrywise error at most $\delta/2$, and hence Frobenius error $\lesssim 1/B$. Thus, in this regime, the low-rank component is allowed to compensate for an error of the same order as the initial quantization error.

By \cref{thm:data lower bound}, there exists a matrix $W$ satisfying $\|W\|_F\leq 1$ and $\|W\|_\infty\leq \sqrt{3}/\sqrt{NN'}$ such that
\[
    \inf_{M\in\mathbb{S}_{B}^{r,\rho}\cap \mathbb{B}_F^2}
    \|X(M-W)\|_F^2
    \gtrsim
    \left(
    \frac{1}{(2B+1)^{\frac{J}{J-1}}
    \left(41\sqrt{\frac{\pi e}{6}}\rho\right)^{\frac{1}{J-1}}}
    \right)^2
    \sigma_N^2,
\]
where $J=J(r,N,N')=NN'/(r(N+N'+1)+2)\in(1,\infty)$. Now write $\rho=\tilde\rho/B$, where $\tilde\rho\asymp 1$. Since $J/(J-1)=1+1/(J-1)$, the denominator above is bounded above by
\(
    B\, C_1^{1/(J-1)}
\)
for a positive constant $C_1$ depending only on $\tilde\rho$ and absolute numerical constants. Consequently,
\[
    \inf_{M\in\mathbb{S}_{B}^{r,\rho}\cap \mathbb{B}_F^2}
    \|X(M-W)\|_F^2
    \gtrsim
    C_1^{-2/(J-1)}
    \frac{\sigma_N^2}{B^2}.
\]
In the simplifying case $N=N'$, one has
\[
    \frac{1}{J-1}
    =
    \frac{r(2N+1)+2}{N^2-r(2N+1)-2}.
\]
Thus, when $r\ll N$, the factor $C_1^{-2/(J-1)}$ is close to a constant and has first-order behavior governed by $r/N$ since 
\(
    C_1^{-2/(J-1)}
    =
    \exp\left(-\frac{2\log C_1}{J-1}\right)\approx 1-4\log C_1 \cdot \frac{r}{N}
\)
.

We now compare this lower bound with the upper bound achieved by \cref{lora_in_GPTQ}. Let $Q$ and $LR$ denote the low-precision and low-rank components returned by GPTQ-intrinsic LoRA using $X$ as the calibration matrix. By \cref{finite bit remark}, we may choose $\delta\lesssim \|W\|_\infty/B$, and therefore, under the flatness assumption on $W$,
\(
    \delta^2\lesssim \frac{1}{NN'B^2}.
\)
Moreover, the spectral assumption on $X$ gives
\[
    \|X-X_r\|_F^2
    =
    \sum_{j=r+1}^N \sigma_j^2
    \leq
    (N-r)\sigma_{r+1}^2
    \lesssim
    (N-r)\sigma_N^2.
\]
Substituting these estimates into \cref{thm:main_layer} gives
\[
    \|XW-X(Q+LR)\|_F^2
    \lesssim
    N'\delta^2\|X-X_r\|_F^2
    \lesssim
    \left(1-\frac{r}{N}\right)
    \frac{\sigma_N^2}{B^2}.
\]
The same finite-alphabet discussion also gives
\(
    \|LR\|_F^2\lesssim NN'\delta^2\lesssim \frac{1}{B^2},
  \) and 
    \( \|Q-W\|_F^2\lesssim NN'\delta^2\lesssim \frac{1}{B^2}.
\)
Thus $\|LR\|_F\lesssim 1/B$, so the low-rank factor satisfies the constraint $\|LR\|_F\leq \rho$, provided the implicit constant in $\rho\asymp 1/B$ is chosen large enough. Also, since $\|W\|_F\leq 1$, we have $\|Q\|_F-1\lesssim 1/B$, and hence $\|Q+LR\|_F\leq 2$ for sufficiently large $B$. Finally, the finite-alphabet construction ensures that $Q/\delta$ is integer-valued with entries bounded by $B$. Thus the output $Q+LR$ lies in the same constraint class $\mathbb{S}_{B}^{r,\rho}\cap\mathbb{B}_F^2$ used in the lower bound.

To summarize, in the regime $\rho\asymp 1/B$, the information theoretic lower bound gives
\[
    \inf_{M\in\mathbb{S}_{B}^{r,\rho}\cap \mathbb{B}_F^2}
    \|XW-XM\|_F^2
    \gtrsim
    C_1^{-2/(J-1)}
    \frac{\sigma_N^2}{B^2} \approx  \left(1-4\log C_1 \cdot \frac{r}{N}\right)\frac{\sigma_N^2}{B^2} ,
\]
while GPTQ-intrinsic LoRA achieves
\[
    \|XW-X(Q+LR)\|_F^2
    \lesssim
    \left(1-\frac{r}{N}\right)
    \frac{\sigma_N^2}{B^2}.
\]
Thus, for $r\ll N$, the upper bound matches the lower bound in its dependence on $B$ and $\sigma_N$, up to constants.

\subsubsection{Larger Low-rank Budgets and the Compression Tradeoff}

We now consider the more permissive regime $\rho\asymp 1$. The purpose of this subsection is to explain what changes when the low rank component is allowed to have substantially larger Frobenius norm than the naive quantization error that would be obtained via round-to-nearest quantization. The main conclusion is that the lower bound is weakened only through a $J$-dependent factor. Making this factor very small requires $J$ to be close to $1$, which corresponds to using a low rank component with enough parameters to compromise the intended compression.

By \cref{thm:data lower bound}, there exists a matrix $W$ satisfying $\|W\|_F\leq 1$ and $\|W\|_\infty\leq \sqrt{3}/\sqrt{NN'}$ such that
\[
    \inf_{M\in\mathbb{S}_{B}^{r,\rho}\cap \mathbb{B}_F^2}
    \|X(M-W)\|_F^2
    \gtrsim
    \frac{\sigma_N^2}
    {(2B+1)^{\frac{2J}{J-1}}
    \left(41\sqrt{\frac{\pi e}{6}}\rho\right)^{\frac{2}{J-1}}},
\]
where $J=J(r,N,N')=NN'/(r(N+N'+1)+2)\in(1,\infty)$. Since $J/(J-1)=1+1/(J-1)$, this can be rewritten, up to absolute constants, as
\[
    \inf_{M\in\mathbb{S}_{B}^{r,\rho}\cap \mathbb{B}_F^2}
    \|X(M-W)\|_F^2
    \gtrsim
    \frac{\sigma_N^2}{B^2}
    (C_2B)^{-\frac{2}{J-1}},
\]
where $C_2$ is proportional to $\rho$, up to absolute numerical constants.

This expression separates the dominant $B^{-2}$ quantization scale from the additional factor $(C_2B)^{-2/(J-1)}$ that reflects the larger low rank magnitude budget. To interpret this factor, suppose for simplicity that $N=N'$. Then, as we already calculated in the previous subsection,
\(
    \frac{1}{J-1}
    =
    \frac{r(2N+1)+2}{N^2-r(2N+1)-2}.
\)
When $r\ll N$, this exponent is small, with leading scale $2r/N$. Hence, up to a first order approximation
\[
    (C_2B)^{-\frac{2}{J-1}}
    =
    \exp\left(-\frac{2\log(C_2B)}{J-1}\right)
\approx   1 -4\log(C_2B)\frac{r}{N}.
\]
Thus, in the small rank regime, increasing the allowed low rank magnitude weakens the lower bound through a relatively mild factor, while the dominant dependence on the alphabet size remains $B^{-2}$.

On the upper bound side, allowing $\rho\asymp 1$ only makes the admissible class larger. The output of GPTQ-intrinsic LoRA already satisfies $\|LR\|_F\lesssim 1/B$ under the finite alphabet scaling discussed in \cref{finite bit remark}. Therefore the approximation produced in the previous subsection remains admissible when $\rho\asymp 1$, and the same upper bound continues to hold:
\[
    \|XW-X(Q+LR)\|_F^2
    \lesssim
    \left(1-\frac{r}{N}\right)\frac{\sigma_N^2}{B^2}.
\]
Consequently, in the regime $r\ll N$, the upper bound still matches the lower bound in its dominant dependence on $B$ and $\sigma_N$, with the gap appearing in the mild factor $(C_2B)^{-2/(J-1)}$.

Finally, the exact lower bound suggests that one can make the lower bound smaller by making $J$ close to $1$. This is not a contradiction to the compression interpretation. The parameter $J$ is essentially the compression ratio associated with the low rank component, since the low rank factors use roughly $1/J$ times as many parameters as the original matrix. For example, if the low rank factors are stored in $16$-bit precision and the quantized component $Q$ uses $b$ bits per entry, then the average number of bits per original weight is approximately
\(
    b+\frac{16}{J}.
\)
For the compressed representation to use fewer than $16$ bits per original weight, one needs
\(
    b+\frac{16}{J}<16,
\)
or equivalently
\(
    b\frac{J}{J-1}<16.
\)
Thus, taking $J$ too close to $1$ may improve the approximation power of the low rank component, but it also destroys the memory savings.

\section{Bid-Up: Alternating Refinement of Quantization and Low-Rank Compensation}\label{sec:bid-up}

Let $Q^{(0)},L^{(0)},R^{(0)}$ denote the output of GPTQ-intrinsic LoRA in \cref{lora_in_GPTQ}. We now use this output as a principled initialization for further refinement. The idea is that with $Q$ fixed, we improve the low-rank component using OLrC ((\cref{OLrC})). With $L,R$ fixed, we improve the quantized component using a new fixed-grid coordinate update, which we call Bid-Up. Each step is designed so that the layer-wise reconstruction error can decrease, and certainly does not increase.

In more detail, fixing $Q^{(0)}$ we update $L^{(0)},R^{(0)}$ to $L^{(1)},R^{(1)}$ using OLrC in \cref{OLrC}. Since OLrC solves the optimal rank-$r$ compensation problem for a fixed quantized matrix, this update can only improve, or leave unchanged, the layer-wise reconstruction error. It is then natural to alternate between improving the low-rank component and improving the quantized component.

A naive update of the quantized component, however, is not guaranteed to improve the same objective. For example, after updating $L^{(t)},R^{(t)}$ to $L^{(t+1)},R^{(t+1)}$, one could try to re-quantize $W-L^{(t+1)}R^{(t+1)}$ from scratch using GPTQ with the same calibration matrix $X$. In practice, this may not yield an error decrease. One reason is that re-quantizing from scratch typically recomputes the quantization parameters from the per-channel statistics of the new residual matrix $W-L^{(t+1)}R^{(t+1)}$. Its dynamic range may be larger than that of the previous residual, and the new quantization grid may not be comparable to the old one. Thus the layer-wise reconstruction objective in \cref{eq:layer_err} is not guaranteed to decrease. This difficulty is consistent with the observation in \citet{saha2024compressing}, where many iterations of alternating minimization were used to obtain improvement, while several subsequent methods apply OLrC only once after fixing $Q$ \cite{deng2025cloq, zhang2024qera, liu2024eora}.

Bid-Up addresses this issue by updating $Q$ while keeping the quantization grid fixed. Given the new low-rank factors $L^{(t+1)},R^{(t+1)}$, set
\[
    \widetilde W = W-L^{(t+1)}R^{(t+1)}.
\]
The goal is to replace $Q^{(t)}$ by a refined quantized matrix $Q^{(t+1)}$ satisfying
\[
    \|X\widetilde W-XQ^{(t+1)}\|_F^2
    \leq
    \|X\widetilde W-XQ^{(t)}\|_F^2.
\]
As before, since
\(    \|X\widetilde W-XQ\|_F^2
    =
    \sum_{j=1}^{N'}\|X\widetilde W_j-XQ_j\|_2^2,
\)
the columns can be processed independently and in parallel.

Fix one column $\tilde w$ of $\widetilde W$, and let $q^{(t)}$ be the corresponding column of $Q^{(t)}$. Bid-Up updates the entries of $q$ sequentially. Suppose $q_1^{(t+1)},\ldots,q_{i-1}^{(t+1)}$ have already been updated, while $q_{i+1}^{(t)},\ldots,q_N^{(t)}$ are still kept at their old values. Then the next entry is chosen by
\[
    q_i^{(t+1)}
    =
    \arg\min_{p\in\mathcal A}
    \left\|
    X\tilde w
    -
    X_{\leq i-1}q_{\leq i-1}^{(t+1)}
    -
    X_i p
    -
    X_{\geq i+1}q_{\geq i+1}^{(t)}
    \right\|_2^2.
\]
As this is a one-dimensional least squares problem over the fixed alphabet $\mathcal A$, it has the closed form
\[
    q_i^{(t+1)}
    =
    \mathcal Q\left(
    \frac{
    \left\langle
    X_i,\,
    X\tilde w
    -
    X_{\leq i-1}q_{\leq i-1}^{(t+1)}
    -
    X_{\geq i+1}q_{\geq i+1}^{(t)}
    \right\rangle
    }{
    \|X_i\|_2^2
    }
    \right).
\]
This update chooses the best value of the $i$th quantized coordinate, given the current values of all other coordinates. Therefore each coordinate update can only decrease, or leave unchanged, the current reconstruction error.
The memory-efficient implementation in \cref{Bid-up} performs this coordinate update for all columns simultaneously.

\begin{algorithm}[t]
\caption{Bid-Up}
\label{Bid-up}
\begin{algorithmic}[1]
\STATE \textbf{Input:} $H=X^\top X$, residual weight matrix $\widetilde W$, current quantized matrix $Q_{\mathrm{initial}}$, quantizer $\mathcal Q$
\STATE \textbf{Output:} updated quantized matrix $Q$
\STATE $D=\operatorname{diag}(H)$, $C=H-D$
\FOR{each column $\widetilde{w}$ of $\widetilde{W}$, in parallel}
    \STATE $q = q_{\mathrm{initial}}$
    \FOR{$i=1,\ldots,N$}
        \STATE $q_{i}=\mathcal{Q}\bigl(\frac{1}{D_{ii}}\langle H_i,\widetilde{w}  \rangle - \frac{1}{D_{ii}}\langle C_i,q  \rangle \bigr)$
    \ENDFOR
\ENDFOR
\STATE \textbf{return} $Q$
\end{algorithmic}
\end{algorithm}

The in-place implementation is important. When row $i$ is updated, rows $1,\ldots,i-1$ of $Q$ already contain their new values, while rows $i+1,\ldots,N$ still contain their old values. Thus the matrix formula in \cref{Bid-up} implements the coordinatewise update above for all columns simultaneously.

Because the quantization grid is fixed during Bid-Up, every coordinate update is an exact minimization over the same finite alphabet. Hence
\(
    \|X(W-L^{(t+1)}R^{(t+1)})-XQ^{(t+1)}\|_F^2
    \leq
    \|X(W-L^{(t+1)}R^{(t+1)})-XQ^{(t)}\|_F^2.
\)
The OLrC step gives the complementary improvement
\(
    \|X(W-L^{(t+1)}R^{(t+1)})-XQ^{(t)}\|_F^2
    \leq
    \|X(W-L^{(t)}R^{(t)})-XQ^{(t)}\|_F^2.
\)
Combining the two inequalities, one full refinement cycle satisfies
\[
    \|X(W-L^{(t+1)}R^{(t+1)})-XQ^{(t+1)}\|_F^2
    \leq
    \|X(W-L^{(t)}R^{(t)})-XQ^{(t)}\|_F^2.
\]
Thus GPTQ-intrinsic LoRA provides a provably controlled initialization through \cref{thm:main_layer}, and the subsequent OLrC and Bid-Up steps refine the low-rank and quantized components while ensuring that the layer-wise reconstruction error {will decrease until convergence}. Bid-Up is also not tied to GPTQ-intrinsic LoRA. It can be used as a fixed-grid quantization refinement step after any PTQ method.

\section{Experiments}

In this section, we present numerical results to demonstrate the effectiveness of our proposed GPTQ-intrinsic LoRA (\cref{lora_in_GPTQ}), and the gains from additional loops of OLrC (\cref{OLrC}) and Bid-Up (\cref{Bid-up}). All results are based on per-channel weight quantization with the standard asymmetric weight quantizer. Experiments are implemented in PyTorch \citep{paszke2019pytorch} using a single Nvidia A100 GPU. 

We first present numerical results on language models in \cref{tbl:qwen3} with $r=16, 32, 64$. We experiment with Qwen3 \citep{yang2025qwen3} models using WikiText2 datasets \citep{merity2016pointer} for evaluation. We use, without modification, the implementations made publicly available via Huggingface \citep{wolf2020transformers}. The Qwen3-1.7B and 0.6B models utilize a standard decoder-only transformer architecture comprising 28 blocks, each containing 7 fundamental linear layers distributed across the grouped-query attention (Q, K, V, O) and SwiGLU-based feed-forward network (Gate, Up, Down) modules. The main difference between the 2 models lies in their embedding dimension, which is 1024 for  the 0.6B model, and 2048 for the 1.7B model. We construct our calibration dataset using 128 random sequences of 2048 tokens sampled from the WikiText2 training dataset for all data-driven algorithms. Following \citet{zhang2025qronos}, the 128 sequences are sampled once at the beginning and used to generate the input calibration matrix $X$ (or equivalently its associated Hessian $H$) for all the 196 linear layers. We use the entire testing set to evaluate the perplexity. Our 4-bit results use the min-max grid, \textit{i.e. $\beta=1$}, and the 3-bit results use a scaled min-max grid with $\beta=0.9$. More implementation details can be found in \cref{sec:better cholesky}. 

Following the discussion in \cref{sec:background}, we choose GPTQ (\cref{GPTQ}) followed by OLrC to be the baseline method to compare against as this approach is representative of multiple recent works \cite{liu2024eora, zhang2024qera, deng2025cloq, saha2024compressing}. We include the plain GPTQ results in \cref{tbl:qwen3}. When reproducing GPTQ, we apply standard dampening by replacing the Hessian $H=X^\top X$ with $H+\lambda I$, where $\lambda=0.01\,\operatorname{mean}(\operatorname{diag}(H))$. We follow the original approach of the publicly available code in \cite{ist2022gptq} to obtain the lower-triangular Cholesky factor $\Psi$ of $(H+\lambda I)^{-1}$. To be specific, the original implementation first computes one Cholesky decomposition $H+\lambda I = \Phi\Phi^\top$, then applies a Cholesky inverse routine to recover $(H+\lambda I)^{-1}$ from $\Phi$, and finally computes a second Cholesky factorization $(H+\lambda I)^{-1}=\Psi\Psi^\top$ to get $\Psi$. This approach avoids explicitly forming the inverse through general-purpose matrix inversion routines and instead exploits the positive definiteness of $H+\lambda I$. In particular, Cholesky-based inversion inherits the numerical advantages of triangular solvers and reduces error amplification compared with explicit general-purpose matrix inversion. GPTQ* denotes our modified (more robust) implementation of GPTQ using the eigen-decomposition plus QR decomposition approach in \cref{lora_in_GPTQ} to compute the Cholesky factor, as discussed in more details in \cref{sec:better cholesky}. We can observe that our robust implementation of GPTQ noticeably improves the results for GPTQ for both model sizes under 3-bit and 4-bit quantization.

In the 3-bit experiments, the row labeled ``With $1$ refinement loop'' corresponds to applying one additional OLrC update followed by one Bid-Up update after GPTQ-intrinsic LoRA. The results show that this refinement step consistently improves performance across the tested settings, despite the fact that GPTQ-intrinsic LoRA already substantially outperforms the GPTQ/GPTQ*+OLrC baselines.

\begin{table}[ht!]
\centering
\caption{\textbf{\textbf{Wiki2 PPL}($\downarrow$) results on weight-only quantization of Qwen3 models.}}
\resizebox{0.9\textwidth}{!}
{\begin{tabular}{clccc!{\vrule width 0.5pt}ccc}
\toprule
 &  & \multicolumn{3}{c}{0.6B} & \multicolumn{3}{c}{1.7B} \\
 &  & $r=16$ & $r=32$ & $r=64$ & $r=16$ & $r=32$ & $r=64$ \\ 
 \midrule
BF16 & ----- & &18.83& & &15.64& \\  
\midrule
\multirow{3}{*}{4-bit} & GPTQ/GPTQ* & &24.33 /23.98& & &25.90 /22.42& \\  
& GPTQ/GPTQ*+OLrC  & 22.43 /22.51  & 21.84 /22.08  & 21.13 /21.14  & 19.92 /19.59  & 19.22 /17.52  & 16.72 /17.87   \\
& GPTQ-intrinsic LoRA & {\bf 20.77} & {\bf 20.18} & {\bf 20.10} & {\bf 16.88} & {\bf 16.64} & {\bf 15.61} \\
\midrule
\multirow{3}{*}{3-bit} & GPTQ/GPTQ* & &49.01 /47.67& & &50.48 /45.10& \\  
& GPTQ/GPTQ*+OLrC  & 46.25 /47.37  & 39.35 /37.65  & 34.85 /35.24  & 34.78 /30.62  & 31.74 /26.87 &  27.71 /24.06 \\
& GPTQ-intrinsic LoRA & 29.66 & 28.94 & 26.67 & 25.64 & 22.32 & 20.49 \\
& With 1 Refinement Loop  & {\bf 28.02}  & {\bf 27.83}  & {\bf 25.24}  & {\bf 24.12}  & {\bf 21.43}  & {\bf 19.86}  \\
\bottomrule
\end{tabular}}
\label{tbl:qwen3}
\end{table}

Next, we present ablation results for vision transformers on the ImageNet classification benchmark. We use ILSVRC-2012 \cite{deng2009imagenet}, a 1000-class dataset with 1.28 million training images and 50 thousand validation images. Images are preprocessed in the standard way: each image is resized to $256\times 256$, followed by a normalized $224\times 224$ center crop. Both models are loaded from the Hugging Face timm library \cite{wightman2021resnet} using the \texttt{patch16\_224} variants.

We evaluate the top-1 accuracy of the original and quantized models on the full validation set. For all data-driven algorithms, we use batches of size $2048$ to generate calibration data. Following \citet{zhang2023post}, we use a new batch of 2048 images to form the calibration matrix $X$ for each layer. Within a given layer, the same $X$ is used for GPTQ-intrinsic LoRA and for all subsequent OLrC and Bid-Up refinement loops.

\cref{tab1} reports results for DeiT-B \cite{touvron2021training}, which has 86 million parameters. The original top-1 accuracy is $81.74\%$. For this model, we use the per-channel scaling parameter $\beta=0.6$. Under 2-bit quantization, the accuracy drops to $74.54\%$ with GPTQ and $74.55\%$ with GPTQ*. Similarly, \cref{tab2} reports results for DeiT-III-L \cite{touvron2022deit}, which has 304 million parameters. The original top-1 accuracy is $84.59\%$. For this model, we use $\beta=0.7$. Under 2-bit quantization, the accuracy drops to $80.73\%$ with GPTQ and $80.70\%$ with GPTQ*.

For these vision transformer experiments, GPTQ* gives nearly the same results as GPTQ. This is consistent with the empirical observation that the calibration matrices $X$ arising in vision transformers are much less ill-conditioned than those arising in language models. Better conditioning improves the numerical stability of Cholesky-based quantization algorithms, but it can also leave less room for improvement from the more robust Cholesky computation used in GPTQ*. Nevertheless, \cref{tab1} and \cref{tab2} show that GPTQ-intrinsic LoRA still recovers a substantial portion of the lost accuracy, and that additional OLrC and Bid-Up refinement loops can further improve performance.

\begin{table}[htbp]
\centering
\caption{Top-1 accuracy ($\uparrow$) for 2-bit weight-only quantization of DeiT-B. Here ``$+k$ loops" denotes $k$ additional refinement loops, each consisting of one OLrC update followed by one Bid-Up update.}
\begin{tabular}{c|ccccc}
\toprule
 & $b_L=4,\ b_R=8$ & GPTQ-intrinsic LoRA & $+1$ loop & $+2$ loops & $+3$ loops \\
\midrule
$r=32$ & 77.50 & 77.45 & 77.95 & 78.06 & \textbf{78.07} \\
$r=16$ & 76.97 & 76.76 & 77.24 & \textbf{77.48} & 77.42 \\
$r=8$  & 76.05 & 76.26 & 76.95 & 76.98 & \textbf{77.20} \\
\bottomrule
\end{tabular}
\label{tab1}
\end{table}

The column labeled $b_L=4,\ b_R=8$ in both tables reports results in which the low-rank factors are themselves quantized. Specifically, the left factor $L$, which serves as the feature extraction factor, is quantized to 4 bits with per-column scaling, while the right factor $R$, which serves as the error absorption factor, is quantized to 8 bits with per-row scaling. The corresponding scaling factors can be combined across the rank dimension, so only $r$ additional full-precision scaling factors need to be stored, which is negligible relative to the model size. In these experiments, $L=V_r$ is quantized by round-to-nearest before forming the augmented Hessian in \cref{lora_in_GPTQ}. This allows GPTQ-intrinsic LoRA to absorb the rounding error introduced by quantizing $L$. The factor $R$, on the other hand, is first computed in full precision by \cref{lora_in_GPTQ} and then quantized by round-to-nearest. The results show that quantizing the low-rank factors from FP32 to low precision is often essentially lossless, and in several cases even improves accuracy.

\begin{table}[htbp]
\centering
\caption{Top-1 accuracy ($\uparrow$) for 2-bit weight-only quantization of DeiT-III-L. Here ``$+k$ loops'' denotes $k$ additional refinement loops, each consisting of one OLrC update followed by one Bid-Up update.}
\begin{tabular}{c|ccccc}
\toprule
 & $b_L=4,\ b_R=8$ & GPTQ-intrinsic LoRA & $+1$ loop & $+2$ loops & $+3$ loops \\
\midrule
$r=32$ & 82.34 & 82.23 & 82.54 & 82.45 & \textbf{82.60} \\
$r=16$ & 82.07 & 82.04 & 82.25 & 82.23 & \textbf{82.44} \\
$r=8$  & 81.72 & 81.67 & 82.06 & 82.08 & \textbf{82.10} \\
\bottomrule
\end{tabular}
\label{tab2}
\end{table}

\section{Conclusions and Future Work}

In this paper, we establish, to our knowledge, the first information-theoretic lower bounds on the layer-wise reconstruction error for low-precision quantization with low-rank compensation. We also propose GPTQ-intrinsic LoRA, an efficient training-free algorithm that uses only a small calibration dataset, and that matches these lower bounds in scaling up to constants and mild factors. We also propose Bid-Up, a fixed-grid quantization refinement technique that can be combined with the low-rank compensation method OLrC to further refine both the low-precision and low-rank components produced by GPTQ-intrinsic LoRA. Our numerical results demonstrate the empirical effectiveness of GPTQ-intrinsic LoRA, as well as the additional gains obtained from refinement loops combining OLrC and Bid-Up.

None of the methods proposed in this paper requires training with labeled data. A natural next step is to use the output of GPTQ-intrinsic LoRA, possibly after additional OLrC and Bid-Up refinement loops, as an initialization for fine tuning the low-rank adapters while keeping the quantized weights fixed \cite{dettmers2024qlora, deng2025cloq}. We leave this direction to future work in order to preserve the concise and theory-focused scope of the present paper, and also because of our limited computational resources.

The near-optimality of GPTQ-intrinsic LoRA may also inform scaling laws for compressed representations. Existing work has primarily focused on unified scaling laws for sparsity and low-precision quantization \cite{frantar2025compression, panferov2026unified}. Extending such results to include low-rank representations remains an open problem. In particular, it would be interesting to develop a unified rate-distortion framework, based on effective bit-width, that can characterize representations combining low-precision, low-rank, and sparse components. A complementary direction is to develop improved budget allocation strategies for bit-width and rank across layers \cite{zhou2026autoqra}, rather than using uniform allocations throughout the network, in order to more closely approach the rate-distortion Pareto frontier.

Other possible directions include incorporating preprocessing rotations \cite{ashkboos2024quarot, chee2023quip, liu2024spinquant} into GPTQ-intrinsic LoRA, extending the framework to activation quantization \cite{xiao2023smoothquant} and KV cache quantization \cite{liu2024kivi, wang2026qsvd}, and exploring the interplay with microscaling under different quantization data types \cite{chen2025int, cook2025four, egiazarian2025bridging}.
\section*{Acknowledgments}
We gratefully acknowledge partial support by the National Science Foundation, via the DMS-2410717 grant.

\newpage
\bibliographystyle{abbrvnat}
\bibliography{citations}
\newpage
\appendix

\section{Proofs}\label{sec:proofs}

\subsection{Proof of Theorem \ref{thm:density}}
\begin{proof}

By transposing if necessary, it suffices to consider the case $N \geq N'$, since the operator norm is invariant under transposition and the transpose of a rank-one matrix is rank one. If $N'=1$, the claim is immediate, so we assume $N'\ge 2$.

It also suffices to prove the result for $\delta=1$. Indeed, let $\epsilon>0$ be arbitrary. Applying the $\delta=1$ case to $W/\delta$ with tolerance $\epsilon/\delta$ gives $Z\in \mathbb{Z}^{N\times N'}$ and a rank-one correction $ab^\top$ such that
\(
    \left\| \frac{W}{\delta} - (Z+ab^\top)\right\|_{\mathrm{op}} < \frac{\epsilon}{\delta}.
\)
Multiplying by $\delta$ gives
\(
    \|W - (\delta Z + (\delta a)b^\top)\|_{\mathrm{op}} < \epsilon.
\)
Thus, from now on, assume $\delta=1$.

    Choose irrational numbers \(\alpha_1, ..., \alpha_{N'-1} \)  such that the set \(\{1, \alpha_1, ..., \alpha_{N'-1} \}\) is linearly independent over \(\mathbb{Q}\). For example one may pick \(\alpha_i = \sqrt{p_i}\) where \(p_i\) is the \(i\)-th prime number. 
    For \(1 \le i \le N'-1\), define \(u_i = e_i + \alpha_i e_{N'} \in \mathbb{R}^{N'}\) where \(e_i\) denotes the \(i\)-th standard basis vector in $\mathbb{R}^{N'}$. Let \(U \in \mathbb{R}^{N' \times N'-1}\) be a matrix whose columns are the vectors \(u_i\).

    We claim that
    \[\{U^\top c \ |\ c \in \mathbb{Z}^{N'}\} \subseteq \mathbb{R}^{N'-1}\]
    is dense in \(\mathbb{R}^{N'-1}\). Indeed, $\langle u_i, c \rangle = \langle e_i, c \rangle + \alpha_i\langle e_N, c \rangle = c_i + \alpha_i c_N \equiv \alpha_i c_N  \mod \mathbbm{Z}$. Since the integers $c_i, 1 \le i \le N'-1$ may be chosen freely, it suffices to show that 
    \[\left\{c_N \begin{pmatrix}
        \alpha_1 \\
        \vdots \\
        \alpha_{N'-1}
    \end{pmatrix} \mod \mathbb{Z} \ |\ c_N \in \mathbb{Z}\right\} \subseteq \mathbb{T}^{N'-1}\]
    is  dense in the Torus $\mathbb{T}^{N'-1}$. This follows from the (discrete) Kronecker-Weyl equidistribution theorem \cite{bailleul2022explicit} because \(1,\alpha_1,...,\alpha_{N'-1}\) are linearly independent over $\mathbb{Q}$.

    Now, take the \(QR\)-decomposition \(U = VR\) where \(V\in\R^{N'\times N'-1}\) has orthonomal columns and \(R\) is invertible. Since
    \(
    U^\top c = R^\top V^\top c,
\)
and $R^\top$ is invertible, the set
\(
    S:=\{V^\top c : c\in \mathbb{Z}^{N'}\}
\)
is also dense in $\mathbb{R}^{N'-1}$.

    We proceed to approximate each row of \(WV \in \mathbb{R}^{N \times N'-1}\) by elements of \(S\). Indeed,  since \(S\) is dense, there exists \(z_1, ..., z_{N} \in \mathbb{Z}^{N'}\) such that  \[Z := \begin{bmatrix}
        z_1^\top \\
        \vdots \\
        z_{N}^\top
    \end{bmatrix} \in \mathbb{Z}^{N \times N'}\]  satisfies
    $\|Z V - WV\|_F=\|(Z-W)V\|_F < \epsilon.$
    Let $v_1,\ldots,v_{N'-1}$ denote the columns of $V$. Then it follows that for any \(v \in \mathrm{span} \{v_1, ..., v_{N'-1}\}
    \subseteq\R^{N'}\) with \(\|v\|_2 = 1\), we have
    \(\|(Q - W)v\|_2 < \epsilon.\) 

    Finally, we let \(T = \mathrm{span} \{v_1, ..., v_{N'-1})\) and denote \(\sigma_i(Q-W)\) to be the \(i\)th largest singular value of \(Q-W\). By the Courant-Fischer Theorem\cite{ikebe1987monotonicity}
    \[\sigma_2(Q-W) = \min_{\dim E = N'-1} \|(Q-W)|_E\|_{op} \leq \|(Q-W)|_T\|_{op} < \epsilon.\]

    Therefore, picking the best rank-one approximation \(a b^\top\) to \(W-Q\) yields
    \[\| W- (Q + ab^\top)\|_{op} < \epsilon.\]
    As $\epsilon>0$ was arbitrary, the proof is complete.
\end{proof}

\subsection{Proof of Lemma \ref{lemma:cover lr mat}}
\begin{proof}

    It is well known \cite{candes2011tight} that the set of matrices in $\mathbb{R}^{N\times N'}$ with Frobenius norm $1$ and rank at most $r$ admits an $\epsilon$-net in Frobenius norm of cardinality at most
\(
    \left(\frac{9}{\epsilon}\right)^{r(N+N'+1)}.
\)

To form an $\epsilon$-net of the one-dimensional segment $[0,\rho]$, take points in $[0,\rho]$ with spacing at most $2\epsilon$. Then every point of $[0,\rho]$ is within distance $\epsilon$ of one of these points, provided the endpoint gaps are also at most $\epsilon$. Hence one may choose such a net with cardinality at most
\(
    \left\lceil \frac{\rho}{2\epsilon}\right\rceil+1.
\)
In particular, since $\epsilon\le \rho$, this cardinality is bounded by $2\rho/\epsilon$ as both \(\left\lceil \frac{\rho}{2\epsilon}\right\rceil\) and $1$ are bounded by \(\rho/\epsilon\).

    For any $M\in\mathbb{S}^{r,\rho}$, write $M=\rho^\star M_0$, where $0\le \rho^\star\le \rho$ and $M_0$ has Frobenius norm $1$ and rank at most $r$, with $M_0$ chosen arbitrarily if $M=0$. Set $\epsilon_1=\epsilon/(1.1\rho)$, and let $S_1$ be an $\epsilon_1$-net for the unit Frobenius norm rank-at-most-$r$ matrices. Since $\epsilon\le \rho$, we have $\epsilon_1<1$, and
\[
    |S_1|
    \le
    \left(\frac{9.9\rho}{\epsilon}\right)^{r(N+N'+1)}
    \le
    \left(\frac{10\rho}{\epsilon}\right)^{r(N+N'+1)}.
\]
    Let $S_2$ be an $\epsilon_2$-net for $[0,\rho]$, with $\epsilon_2$ to be chosen. Taking points with spacing $2\epsilon_2$ gives
\[
    |S_2|\le \left\lceil \frac{\rho}{2\epsilon_2}\right\rceil+1.
\]
Choose $\widetilde M_0\in S_1$ and $\widetilde \rho\in S_2$ such that
\[
    \|M_0-\widetilde M_0\|_F\le \epsilon_1,
    \qquad
    |\rho^\star-\widetilde \rho|\le \epsilon_2.
\]

    Thus,\[
    \|\rho^\star M_0-\tilde{\rho}\widetilde{M}_0\|_F\leq \|(\rho^\star-\tilde{\rho})M_0\|_F+\|\tilde{\rho}(M_0-\widetilde{M}_0)\|_F\leq \epsilon_2 + \rho\epsilon_1=\epsilon_2+\frac{\epsilon}{1.1}.
    \]
    Choose $\epsilon_2=\epsilon-\frac{\epsilon}{1.1}=\frac{\epsilon}{11}$ so the right-hand side above is $\epsilon$. Moreover,  $|S_2|\leq \lceil\frac{5.5\rho}{\epsilon}\rceil +1\leq \frac{5.5\rho}{\epsilon} +2 < \frac{5.5\rho}{\epsilon} +\frac{2\rho}{\epsilon} \leq \frac{10\rho}{\epsilon}$. Hence the products $\widetilde\rho\,\widetilde M_0$, with $\widetilde\rho\in S_2$ and $\widetilde M_0\in S_1$, form an $\epsilon$-net for $\mathbb{S}^{r,\rho}$, with
cardinality bounded by
  \(  
    \left(\frac{10\rho}{\epsilon}\right)^{r(N+N'+1)+1}.
\)
\end{proof}

\subsection{Proof of Corollary \ref{thm:fancy lb corollary}}

\begin{proof}
Let $d=NN'$. We have
\begin{align*}
    \frac{\operatorname{Vol}(\mathbb{B}_{\mathrm{flat}}^1)}
    {\operatorname{Vol}(\mathbb{B}_F^1)}
    &=
    \frac{
    \operatorname{Vol}\bigl(
    \{x\in\mathbb{R}^d:\|x\|\le 1,\ \|x\|_\infty\le \sqrt{3}/\sqrt d\}
    \bigr)}
    {
    \operatorname{Vol}\bigl(
    \{x\in\mathbb{R}^d:\|x\|\le 1\}
    \bigr)}
    \\
    &=
    \frac{
    \operatorname{Vol}\bigl(
    \{x\in\mathbb{R}^d:\|x\|\le \sqrt d,\ \|x\|_\infty\le \sqrt 3\}
    \bigr)}
    {
    \operatorname{Vol}\bigl(
    \{x\in\mathbb{R}^d:\|x\|\le \sqrt d\}
    \bigr)}
    \\
    &\ge
    \frac{
    \operatorname{Vol}\bigl(
    \{x\in\mathbb{R}^d:\|x\|\le \sqrt d,\ \|x\|_\infty\le \sqrt 3\}
    \bigr)}
    {\frac{1.1}{\sqrt{\pi d}}(\sqrt{2\pi e})^d},
\end{align*}
where the last step uses Stirling's approximation for the volume of a Euclidean ball. We now bound the numerator. Let $X_1,\dots,X_d$ be i.i.d. $\mathrm{Unif}([-1,1])$ random variables. Set $S_d=\sum_{i=1}^d X_i^2$ and $Y=X^2$, where $X\sim\mathrm{Unif}([-1,1])$. A direct computation gives $\mu=\mathbb{E}Y=1/3$, $\sigma^2=\operatorname{Var}(Y)=1/5-1/9=4/45$, and
\[
    \frac{\mathbb{E}|Y-\mu|^3}{\sigma^3}<5.4.
\]
Then
\begin{align*}
    \operatorname{Vol}\bigl(
    \{x\in\mathbb{R}^d:\|x\|\le \sqrt d,\ \|x\|_\infty\le \sqrt 3\}
    \bigr)  &=
    (2\sqrt 3)^d
    \mathbb{P}\left(3\sum_{i=1}^d X_i^2\le d\right) \\
    & =
    (2\sqrt 3)^d
    \mathbb{P}\left(\frac{S_d}{\sqrt d}\le \frac{\sqrt d}{3}\right) \\
    & =
    (2\sqrt 3)^d
    \mathbb{P}\left(\frac{S_d-\mu d}{\sigma\sqrt d}\le 0\right).
\end{align*}
By Berry-Esseen, see for example \cite{shevtsova2007sharpening},
\[
\left|
\mathbb{P}\left(\frac{S_d-\mu d}{\sigma\sqrt d}\le 0\right)
-\mathbb{P}(Z\le 0)
\right|
\le
0.7655\frac{\mathbb{E}|Y-\mu|^3}{\sigma^3}\frac{1}{\sqrt d}
<
\frac{5}{\sqrt d},
\]
where $Z\sim N(0,1)$. Thus
\begin{equation}
\begin{aligned}
    \frac{\operatorname{Vol}(\mathbb{B}_{\mathrm{flat}}^1)}
    {\operatorname{Vol}(\mathbb{B}_F^1)}
    &\ge
    \frac{(2\sqrt 3)^d(1/2-5/\sqrt d)}
    {\frac{1.1}{\sqrt{\pi d}}(\sqrt{2\pi e})^d}
    \\
    &=
    \frac{1}{1.1}
    \left(\frac{1}{2}\sqrt{\pi d}-5\sqrt{\pi}\right)
    \left(\frac{2\sqrt 3}{\sqrt{2\pi e}}\right)^d
    \\
    &>
    \left(\frac{2\sqrt 3}{\sqrt{2\pi e}}\right)^d,
\end{aligned}
\end{equation}
where $d\ge 144$ is used in the last step.
As in the proof of \cref{thm:fancy lower bound}, let
\[
    U_\epsilon
    :=
    \left\{
    A\in \mathbb{B}_F^1 :
    \operatorname{dist}\left(A,\mathbb{S}_{B}^{r,\rho}\cap \mathbb{B}_F^2\right)
    \le \epsilon
    \right\}.
\]
Recall that we want to show
\[
    \operatorname{Vol}(U_\epsilon)
    \le
    \frac{1}{2}\operatorname{Vol}(\mathbb{B}_{\mathrm{flat}}^1)
\]
so that some flat matrix lies outside of $U_\epsilon$. It is thus sufficient to show
\[
    \operatorname{Vol}(U_\epsilon)
    \le
    \frac{1}{2}
    \left(\frac{2\sqrt 3}{\sqrt{2\pi e}}\right)^d
    \operatorname{Vol}(\mathbb{B}_F^1).
\]
As in the proof of \cref{thm:fancy lower bound},
\[
    \operatorname{Vol}(U_\epsilon)
    \le
    \mathcal{N}(\epsilon,\|\cdot\|_F,\mathbb{S}_{B}^{r,\rho}\cap \mathbb{B}_F^2)
    (2\epsilon)^{NN'}
    \operatorname{Vol}(\mathbb{B}_F^1).
\]
Thus it suffices to require
\[
    \mathcal{N}(\epsilon,\|\cdot\|_F,\mathbb{S}_{B}^{r,\rho}\cap \mathbb{B}_F^2)
    (2\epsilon)^{NN'}
    \le
    \frac{1}{2}
    \left(\frac{2\sqrt 3}{\sqrt{2\pi e}}\right)^d.
\]
Using the same covering estimates as in \cref{thm:fancy lower bound}, this holds whenever
\[
    \epsilon
    <
    \frac{20}{41}
    \cdot
    \frac{1}
    {(2B+1)^{\frac{J}{J-1}}
    \left(41\rho\frac{\sqrt{2\pi e}}{2\sqrt 3}\right)^{\frac{1}{J-1}}}.
\]
Since $\sqrt{2\pi e}/(2\sqrt 3)=\sqrt{\pi e/6}$, this is exactly the claimed bound.
\end{proof}

\subsection{Proof of Theorem \ref{thm:main_channel}}

\begin{proof}
Let $\hat X=XV_r$, and write
\(
    X_+
    :=
    \begin{pmatrix}
        X\\
        \sqrt{\lambda}I_1
    \end{pmatrix},
    \)
    \(\hat X_+
    :=
    \begin{pmatrix}
        \hat X\\
        \sqrt{\lambda}I_2
    \end{pmatrix}
    \), \(
    \mathbb X_+:=\begin{bmatrix}X_+ & \hat X_+\end{bmatrix}.
\)
By \cref{lemma:err evolution}, applied to the augmented calibration matrix $\mathbb X_+$,
\begin{align*}
\left\|
\begin{pmatrix}
    X\mathbf w\\
    \sqrt{\lambda}\mathbf w\\
    \mathbf 0
\end{pmatrix}
-
\begin{pmatrix}
    X(\mathbf q+L\mathbf r)\\
    \sqrt{\lambda}\mathbf q\\
    \sqrt{\lambda}\mathbf r
\end{pmatrix}
\right\|_2^2 
\quad &= \quad 
\left\|
\mathbb X_+
\begin{bmatrix}
    \mathbf w\\
    \mathbf 0
\end{bmatrix}
-
\mathbb X_+
\begin{bmatrix}
    \mathbf q\\
    \mathbf r
\end{bmatrix}
\right\|_2^2
\\
& =
\sum_{j=1}^N
|w_j^{(j-1)}-q_j|^2
\left\|
P_{((\mathbb X_+)_{\ge j+1})^\perp}(X_+)_j
\right\|_2^2 \\
& \le
\frac{\delta^2}{4}
\sum_{j=1}^N
\left\|
P_{((\mathbb X_+)_{\ge j+1})^\perp}(X_+)_j
\right\|_2^2 \\
& \le
\frac{\delta^2}{4}
\sum_{j=1}^N
\left\|
P_{\hat X_+^\perp}(X_+)_j
\right\|_2^2 \\
& =
\frac{\delta^2}{4}
\|P_{\hat X_+^\perp}X_+\|_F^2
=
\frac{\delta^2}{4}
\left(
\|X_+\|_F^2-\|P_{\hat X_+}X_+\|_F^2
\right).
\end{align*}

\SZnew{The second equality uses the recursive error evolution identity \eqref{final error} only for the first $N$ steps, as \cref{lora_in_GPTQ} terminates at $N$ iterations.} The second inequality uses the fact that, for $j\le N$, the columns of $\hat X_+$ are contained among the future columns of $\mathbb X_+$.
We now estimate the final term. Since $X=U\Sigma V^\top$ and $L=V_r$, we have $\hat X=XV_r=U_r\Sigma_r$. Also,
\[
    \|X_+\|_F^2
    =
    \|\Sigma\|_F^2+\lambda N
    =
    \sum_{i=1}^N \sigma_i^2+\lambda N.
\]
Moreover,
\[
    P_{\hat X_+}
    =
    \hat X_+(\hat X_+^\top\hat X_+)^{-1}\hat X_+^\top
    =
    \begin{pmatrix}
        U_r\Sigma_r\\
        0_{N\times r}\\
        \sqrt{\lambda}I_r
    \end{pmatrix}
    (\Sigma_r^2+\lambda I)^{-1}
    \begin{bmatrix}
        \Sigma_r U_r^\top & 0_{r\times N} & \sqrt{\lambda}I_r
    \end{bmatrix}.
\]
Therefore,
\[
    P_{\hat X_+}X_+
    =
    \begin{pmatrix}
        U_r\Sigma_r(\Sigma_r^2+\lambda I)^{-1}\Sigma_r^2V_r^\top\\
        0_{N\times {N}}\\
        \sqrt{\lambda}(\Sigma_r^2+\lambda I)^{-1}\Sigma_r^2V_r^\top
    \end{pmatrix}.
\]
Since $V_r$ has orthonormal columns,
\begin{align*}
\|P_{\hat X_+}X_+\|_F^2
&=
\left\|
\Sigma_r(\Sigma_r^2+\lambda I)^{-1}\Sigma_r^2
\right\|_F^2
+
\left\|
\sqrt{\lambda}(\Sigma_r^2+\lambda I)^{-1}\Sigma_r^2
\right\|_F^2 \\
&=
\sum_{i=1}^r
\frac{\sigma_i^4}{\sigma_i^2+\lambda}
\ge
\sum_{i=1}^r
\frac{\sigma_i^4-\lambda^2}{\sigma_i^2+\lambda}
=
\sum_{i=1}^r \sigma_i^2-r\lambda.
\end{align*}
Combining these estimates gives
\[
\begin{aligned}
\|X\mathbf w-X(\mathbf q+L\mathbf r)\|^2
+\lambda\|\mathbf w-\mathbf q\|^2
+\lambda\|\mathbf r\|^2 
& \le
\frac{\delta^2}{4}
\left(
\sum_{i=1}^N\sigma_i^2+\lambda N
-
\sum_{i=1}^r\sigma_i^2
+
r\lambda
\right) \\
& =
\frac{\delta^2}{4}
\left(
\|X-X_r\|_F^2+(N+r)\lambda
\right),
\end{aligned}
\]
as claimed.
\end{proof}

\section{Optimal Low-Rank Compensation via GSVD}\label{sec:GSVD}

This appendix recalls the linear algebra underlying the optimal low-rank compensation step used in \cref{OLrC}. The main point is that, once the quantized matrix $Q$ is fixed, the best rank $r$ correction is a weighted low-rank approximation. This can be expressed either through a generalized singular value decomposition or, more generally, as a special case of Penrose regression.

Consider a matrix $A\in\mathbb{R}^{m\times n}$. Its singular value decomposition is $A=U\Sigma V^\top$, where $U$ and $V$ are orthogonal matrices and $\Sigma$ is diagonal, with nonnegative diagonal entries in decreasing order. The columns of $U$ and $V$ are the left and right singular directions of $A$, while the diagonal entries of $\Sigma$ are the singular values.
We will use the SVD through the Eckart-Young theorem \cite{eckart1936approximation}. Namely, for any positive integer $k\le \operatorname{rank}(A)$, the best rank $k$ approximation to $A$, in both Frobenius and operator norm, is $A_k=\sum_{i=1}^k \sigma_i u_i v_i^\top$, where $\sigma_i$ are the singular values of $A$, and $u_i$, $v_i$ are the corresponding left and right singular vectors.

We next recall a generalized singular value decomposition adapted to weighted inner products. Let $M_1$ and $M_2$ be positive definite matrices of sizes $m\times m$ and $n\times n$, respectively. A generalized singular value decomposition of $A$ with respect to $(M_1,M_2)$ is a factorization $A=U\Sigma V^\top$, where $U^\top M_1U=I$, $V^\top M_2V=I$, and $\Sigma$ is diagonal. Thus $U$ and $V$ are orthonormal with respect to the weighted inner products induced by $M_1$ and $M_2$. This decomposition is closely related to constrained principal component analysis and redundancy analysis \cite{takane2001constrained,takane2007regularized,takane2008regularized}.

The GSVD can be computed from an ordinary SVD. If $M_1^{1/2}AM_2^{1/2}=\widetilde U\Sigma\widetilde V^\top$ is a standard SVD, then $U=M_1^{-1/2}\widetilde U$ and $V=M_2^{-1/2}\widetilde V$ give the desired weighted decomposition. We write $\mathcal T_r(A)$ for the usual rank $r$ SVD truncation and $\widetilde{\mathcal T}_r(A)$ for the corresponding rank $r$ GSVD truncation, with the weighting matrices understood from context.
In the case of \cref{OLrC}, which is also the formulation appearing in \cite{liu2024eora,zhang2024qera,deng2025cloq,saha2024compressing}, the relevant weights are $(M_1,M_2)=(X^\top X,I)$. More generally, assuming $X$ has full column rank, the problem $\min_{\operatorname{rank}(M)\le r}\|E-XM\|_F^2$ has solution $M^\star=\widetilde{\mathcal T}_r(X^\dagger E)=(X^\top X)^{-1/2}\mathcal T_r((X^\top X)^{1/2}X^\dagger E)$. In the quantization error compensation setting, $E=X(W-Q)$, so $X^\dagger E=W-Q$. Thus OLrC computes a weighted rank $r$ approximation of $W-Q$, where the weighting is induced by the calibration Hessian $X^\top X$. 

This problem can also be viewed as a special case of Penrose regression \cite{ten1993least,takane2007regularized}.
The more general Penrose regression problem is
\begin{equation}\label{Penrose}
\begin{aligned}
    \min_{Z} \quad & h(Z):=\|AZB^\top-Y\|_F^2, \\
    \text{subject to} \quad & \operatorname{rank}(Z)\le r,
\end{aligned}
\end{equation}
where $Y$ is given and $A$ and $B$ are full column rank matrices, so $A^\top A$ and $B^\top B$ are invertible. Let $\widehat Z=A^\dagger Y(B^\top)^\dagger$ be the unconstrained least squares estimator, and define $G=(A^\top A)^{1/2}\widehat Z(B^\top B)^{1/2}$. The key identity is
\begin{equation}\label{Identity}
\begin{aligned}
    h(Z)
    &=
    \|A\widehat ZB^\top-Y\|_F^2
    +
    \operatorname{tr}\left(
    B(\widehat Z-Z)^\top A^\top A(\widehat Z-Z)B^\top
    \right) \\
    &=
    \left\|G-(A^\top A)^{1/2}Z(B^\top B)^{1/2}\right\|_F^2
    +
    \operatorname{tr}(Y^\top Y)
    -
    \operatorname{tr}(G^\top G).
\end{aligned}
\end{equation}
Consequently, minimizing \eqref{Penrose} is equivalent to computing a best rank $r$ approximation of $G$. Hence the minimizer is obtained by the rank $r$ GSVD truncation of $\widehat Z$ with respect to $(A^\top A,B^\top B)$, namely $Z^\star=\widetilde{\mathcal T}_r(\widehat Z)$.

This general solution also appears in low-rank compression of neural networks. For example, the Optimal Brain Decomposition method of \citet{li2026optimal} considers $\min_{\operatorname{rank}(M)\le r}\|L_x^\top(W-M)L_g\|_F^2$, where $L_x$ and $L_g$ are Cholesky factors of the input Hessian $H_x$ and output-gradient Hessian $H_g$, respectively. The proposed solution $(L_x^\top)^{-1}\mathcal T_r(L_x^\top W L_g)L_g^{-1}$ is precisely a truncated GSVD of $W$ with respect to $(H_x,H_g)$, where $L_x^\top$ and $L_g$ play the roles of $H_x^{1/2}$ and $H_g^{1/2}$.

Finally, the truncated SVD and GSVD can often be computed efficiently using randomized numerical linear algebra. Dense SVD or eigendecomposition costs $\mathcal O(N^3)$ for an $N\times N$ matrix, which can be expensive when $N$ is large. In the low-rank regime, randomized Krylov methods such as RBKI \cite{tropp2023randomized} can compute truncated approximations using roughly $\mathcal O(N^2r)$ operations for dense matrices, while column Nyström methods such as RPCholesky \cite{chen2022randomly} can have cost on the order of $\mathcal O(Nr^2)$, depending on the access model. Fast least squares solvers tailored to machine learning optimization provide another related approach \cite{maalouf2019fast,maalouf2022fast}.

\section{Implementation Details: Dampening and Stable Triangular Factorization}\label{sec:better cholesky}

This section describes some numerical choices used in our implementation. First, we explain how dampening is applied consistently across OLrC and Bid-Up during refinement. Second, we describe a stable way to compute the triangular factor needed by GPTQ and GPTQ-intrinsic LoRA. This second point is important because the augmented Hessian used in GPTQ-intrinsic LoRA is singular before regularization and can remain highly ill-conditioned after standard dampening. The goal of the procedure below is not to change the underlying GPTQ update, but to compute the same inverse-Hessian factor more robustly, so that the regularization parameter $\lambda$ can be chosen for the quantization objective rather than for the numerical success of a Cholesky decomposition.

\paragraph{Dampening for OLrC and Bid-Up.} When implementing OLrC (\cref{OLrC}), we use the same dampened Hessian as in GPTQ. That is, we replace $H=X^\top X$ by $H+\lambda I$ when computing the matrix square root and inverse square root needed in the GSVD formulation. This improves numerical stability when computing $(H+\lambda I)^{1/2}$ and $(H+\lambda I)^{-1/2}$ by eigen-decomposition. We also remark that computing the full SVD (using \textit{torch.svd}) before truncation to rank $r$ is not only computationally unnecessary, but may even fail due to the SVD routine not converging. We thus use \textit{torch.svd\_lowrank} (based on a randomized procedure in \cite{halko2011finding}) to directly compute the best low rank approximation with a slightly enlarged target rank, say $r+10$, then truncate to the target rank $r$. 

Bid-Up (\cref{Bid-up}) only requires the reciprocal of the diagonal entries of the Hessian, and therefore does not have the same numerical stability issue associated with matrix square roots or inverse square roots. Nevertheless, in our refinement loops we also replace $H$ by $H+\lambda I$ in Bid-Up. The reason is that the OLrC and Bid-Up steps are meant to alternate on the same objective. With $Q$ fixed, applying dampening in OLrC is equivalent to minimizing the regularized loss $\|X(W-Q-LR)\|_F^2+\lambda\|W-Q-LR\|_F^2$ over $LR$. Applying the same dampening in Bid-Up ensures that the quantized update is also performed with respect to this regularized objective.

\paragraph{Dampening for GPTQ-intrinsic LoRA.}
We follow the standard choice of setting $\lambda=0.01\,\operatorname{mean}(\operatorname{diag}(\mathbb{H}))$ to form the dampened augmented Hessian $\mathbb{H}+\lambda I$ for all layers in all language and vision models we tested, with the following exception. We observe that the inputs to the down-projection layer in the third transformer block of the Qwen3-0.6B and 1.7B models exhibit a highly concentrated singular value spectrum, in which a single leading singular value overwhelmingly dominates all remaining singular values and attains a very large magnitude. Only when forming the dampened augmented Hessian for that specific layer in these two models, we choose a slightly bigger $\lambda=0.05\,\operatorname{mean}(\operatorname{diag}(\mathbb{H}))$. In the 3-bit results with 1 refinement loop, we also skip this special layer and only perform the refinement loop for the other 195 linear layers in these two models.

\paragraph{Stable Triangular Factorization.} As discussed in \cref{sec:method}, the augmented Hessian $\mathbb{H}$ in GPTQ-intrinsic LoRA (\cref{lora_in_GPTQ}) is more ill-conditioned than the original Hessian $H$. In fact, before regularization, $\mathbb{H}$ is singular because the augmented features include $\hat X=XL$, whose columns lie in the column span of $X$. We observe that, with the standard GPTQ choice $\lambda=0.01\,\operatorname{mean}(\operatorname{diag}(\mathbb{H}))$, the original Cholesky-based GPTQ implementation can fail when applied to $\mathbb{H}+\lambda I$, since this matrix may still be too ill-conditioned. To avoid increasing $\lambda$ solely for numerical reasons, we use a more robust procedure for computing the triangular factor required by GPTQ.

Specifically, we first compute an eigen-decomposition $\mathbb{H}+\lambda I=PSP^\top$. Then we form the inverse square root $(\mathbb{H}+\lambda I)^{-1/2}=PS^{-1/2}P^\top$. Next, we compute a QR decomposition $(\mathbb{H}+\lambda I)^{-1/2}=OG$, where $O$ is orthogonal and $G$ is upper triangular. We then set the lower triangular GPTQ factor to be $\Psi_{\mathbb H}=G^\top$. Indeed,
\[
    \Psi_{\mathbb H}\Psi_{\mathbb H}^\top
    =
    G^\top G
    =
    \bigl((\mathbb{H}+\lambda I)^{-1/2}\bigr)^\top
    (\mathbb{H}+\lambda I)^{-1/2}
    =
    (\mathbb{H}+\lambda I)^{-1}.
\]
With the standard QR sign convention in which $G$ has positive diagonal entries, $\Psi_{\mathbb H}$ is the Cholesky factor of $(\mathbb{H}+\lambda I)^{-1}$. The key practical point is that this computes the factor needed by GPTQ without applying Cholesky decomposition directly to an ill-conditioned inverse Hessian.

This distinction matters because $\lambda$ plays several roles. With an infinite quantization grid, smaller $\lambda$ is preferable from the point of view of reconstruction error, provided $(H+\lambda I)^{-1}$ can be computed stably. With a finite grid, $\lambda$ must also be large enough to help keep the adaptive GPTQ updates within a mildly enlarged quantization range. In a Cholesky-based implementation, however, $\lambda$ may need to be chosen even larger merely so that the Cholesky decomposition of $(H+\lambda I)^{-1}$ succeeds. This last requirement is an implementation artifact rather than a feature of the quantization objective.

Thus, for ill-conditioned or low-rank Hessians, directly computing the Cholesky decomposition of $(H+\lambda I)^{-1}$ can force an unnecessarily large choice of $\lambda$. By computing the triangular factor through eigen-decomposition followed by QR, $\lambda$ can instead be tuned according to the quantization tradeoff of balancing reconstruction distortion against the boundedness requirements imposed by the finite grid.

\end{document}